\newtheorem{theorem}{Theorem}
\newtheorem{corollary}{Corollary}
\newtheorem{lemma}{Lemma}
\newtheorem{definition}{Definition}
\newtheorem{axiom}{Axiom}
\newcounter{protocol}
\newenvironment{protocol}[1]
  {\par\addvspace{\topsep}
   \noindent
   \tabularx{\linewidth}{@{} X @{}}
    \hline
    \refstepcounter{protocol}\textbf{Protocol \theprotocol} #1 \\
    \hline}
  { \\
    \hline
   \endtabularx
   \par\addvspace{\topsep}}
\newcommand{\reals}{\mathbb{R}}
\newcommand{\conv}{\textrm{conv}}
\newcommand{\Y}{\mathcal{Y}}
\newcommand{\N}{\mathbb{N}}
\newcommand{\relint}{\textrm{relint}}
\newcommand{\dom}{\textrm{dom}}
\newcommand{\ones}{\mathbbm{1}}
\DeclareMathOperator*{\argmax}{arg\,max}
\DeclareMathOperator*{\argmin}{arg\,min}
\newcommand{\actionset}{\mathcal{C}}
\newcommand{\Reg}{\text{Reg}}
\newcommand{\ignore}[1]{}
\definecolor{darkblue}{rgb}{0.0,0.0,0.2}
\definecolor{darkgreen}{rgb}{0.0,0.3,0.0}
\newcommand{\Comments}{1}
\newcommand{\mynote}[2]{\ifnum\Comments=1\textcolor{#1}{#2}\fi}
\newcommand{\mytodo}[2]{\ifnum\Comments=1	\todo[linecolor=#1!80!black,backgroundcolor=#1,bordercolor=#1!80!black]{#2}\fi}
\newcommand{\rick}[1]{\mynote{red}{[RN: #1]}}
\newcommand{\rickt}[1]{\mytodo{red!20!white}{RN: #1}}
\newcommand{\journal}[1]{}
\title{Smooth Quadratic Prediction Markets}
\author{
Enrique Nueve\\
Department of Computer Science\\
University of Colorado Boulder\\
\texttt{enrique.nueveiv@colorado.edu} \\
\And
Bo Waggoner \\
Department of Computer Science\\
University of Colorado Boulder \\
\texttt{bwag@colorado.edu} \\ 
}
\begin{document}

\maketitle

\begin{abstract}
When agents trade in a Duality-based Cost Function prediction market, they collectively implement the learning algorithm Follow-The-Regularized-Leader~\citep{abernethy2013efficient}.
We ask whether other learning algorithms could be used to inspire the design of prediction markets.
By decomposing and modifying the Duality-based Cost Function Market Maker's (DCFMM) pricing mechanism, we propose a new prediction market, called the Smooth Quadratic Prediction Market, the incentivizes agents to collectively implement general steepest gradient descent.
Relative to the DCFMM, the Smooth Quadratic Prediction Market has a better worst-case monetary loss for AD securities while preserving axiom guarantees such as the existence of instantaneous price, information incorporation, expressiveness, no arbitrage, and a form of incentive compatibility. 
To motivate the application of the Smooth Quadratic Prediction Market, we independently examine agents' trading behavior under two realistic constraints: bounded budgets and buy-only securities. 
Finally, we provide an introductory analysis of an approach to facilitate adaptive liquidity using the Smooth Quadratic Prediction Market.
Our results suggest future designs where the price update rule is separate from the fee structure, yet guarantees are preserved.
\end{abstract}

\section{Introduction}

\paragraph{What are Prediction Markets?} Prediction markets allow traders to buy and sell securities whose payoffs depend on the realization of future events \citep{hanson2003combinatorial}. 
Prediction market platforms such as Kalshi and Polymarket on average move millions of dollars per event-based market.
In this work, we examine prediction markets that trade Arrow-Debreu (AD) securities, which pay $\$1$ if a particular state of the world is reached and $\$0$ otherwise \citep{arrow1964role}.
Given a finite random variable $\Y =\{y_1, \dots ,y_d\}$ over $d$ mutually exclusive and exhaustive outcomes, AD securities are designed to elicit a full probability distribution with respect to $\Y$, reflected by the current prices of the securities.

\paragraph{Duality-based CFMM} The work of \cite{abernethy2013efficient} introduced a general framework for designing automated prediction markets over combinatorial or infinite state outcome spaces named the Duality-based Cost Function Market Maker (DCFMM).
As stated by \cite{abernethy2013efficient}, ``An automated market maker is a market institution that adaptively sets prices for each security and is always willing to accept trades at these prices.'' 
The DCFMM satisfies the desirable market axioms of no arbitrage, bounded worst-case loss, information incorporation, expressiveness, incentive compatibility, and efficient computability.
The DCFMM over the last decade has been examined in numerous works such as \cite{abernethy2014general,devanurbudget,frongillo2017axiomatic,frongillo2023axiomatic}.
A core observation of \cite{abernethy2013efficient} is that the DCFMM uses Follow-The-Regularized-Leader (FTRL) to set prices for securities, by interpreting past trades as loss vectors.
Although not our main contribution, as it has been hinted at in the literature before by \cite{devanurbudget}, to our knowledge, we are the first to formally show (in Appendix \ref{app:cftrlequiv}) a tight equivalence between Continuous FTRL and DCFMM in terms of regret and worst-case monetary loss of the mechanism. 
Inspired by this relation, we ask if other machine learning algorithms could be used to design a new prediction market with at least the same or better properties than the DCFMM.
This work proposes a new market maker payment mechanism called the Smooth Quadratic Prediction Market, which is analogous to the learning algorithm general steepest descent in terms of trader's incentives. 

\paragraph{Smooth Quadratic Prediction Markets}
We propose the Smooth Quadratic Prediction Market, which increases profitability for the market maker relative to the DCFMM but still has the performance guarantees of the DCFMM, such as no arbitrage, bounded worst-case loss, information incorporation, expressiveness, computational efficiency, and a form of incentive compatibility.
We show that the Smooth Quadratic Market can be interpreted as replacing a Bregman divergence term in the DCFMM payments by a simpler quadratic ``fee''.
In the body of this work, we show that the Smooth Quadratic Prediction Market has many of the same axiom guarantees as that of the DCFMM, demonstrate how agents are incentivized to follow general steepest descent when trading, examine how agents trade under either bounded budgets or a buy-only market, and provide an approach to facilitate adaptive liquidity.


\section{Background and Notation}
Let us denote the all ones vector by $\ones  = (1,\dots ,1 )\in\reals^d$.
Let us denote by the vector $\delta_i= (0,\dots ,1,\dots ,0)$, i.e. all-zero except for a one in the $i$-th position.
Comparison between vectors is pointwise, e.g. $q  \succ q'$ if $q_i>q_i'$ for all $i=1,\dots , d$ and similarly for $ \succeq$. We say $q\gneqq q'$ when $q_i\geq q_i'$ for all $i$ and $q\neq q'$.
We denote $((u)_+)_i = \max (u_i, 0)$.
For a set $Y \subseteq \reals^d$, we denote by $\text{closure}(Y)$ as the smallest closed set containing all of the limit points of $Y$. 
Define $\reals_{+}^{d}$ to be the nonnegative orthant.
Let $\Delta_d=\{ p\in \reals_{+}^{d} \mid \|p\| =1 \}$ be the set of probability distributions over $d$ outcomes, represented as vectors.
Let $\relint (\Delta_d)=\Delta_d \setminus \{p\in\Delta_d |\exists \; i\in [d], p_i=0 \}$.
We let $||\cdot || :\reals^{d}\to \reals_{+}$ denote a general p-norm. 
Given a norm $\|\cdot \|$, we define it's dual norm $\|y \|_{*}= \sup_{\|x \|\leq 1}\langle x,y \rangle$.
Let $f:\reals^{d} \to (-\infty ,+\infty ]$ be a function. 
We define the Fenchel conjugate of $f$ by $ f^*:\reals^d\to [-\infty ,+\infty ]$ such that $f^* (y)=\sup_{x\in\dom (f)}\;\langle y,x\rangle -f(x)$.
We will use the following definitions for a function $f$.
\begin{itemize}
    \item \textbf{convex}: $\forall \; x,y\in\reals^d$, $\lambda \in [0, 1]$, $f(\lambda x+(1-\lambda )y)\leq \lambda f(x)+(1-\lambda )f(y)$.
    \item \textbf{increasing}: $f (q) >  (q')$ $\forall$ $q, q'\in\reals^d$ with $q \gneqq q'$.
    \item \textbf{1-invariant}: $f(q+\alpha\ones )=f(q)+\alpha$ for $q\in\reals^d, \alpha\in \reals$. 
    \item \textbf{probability mapping}: $f$ is twice-differentiable, $\nabla f:\reals^{d}\to \Delta_{d}$, and $\text{closure}(\{\nabla f(q)\mid q\in\reals^{d}\} ) = \Delta_d$.
\end{itemize}
We shall refer to a function which is convex, increasing, 1-invariant, and a probability mapping as \textbf{CIIP}.


\subsection{Bregman Divergences, Smoothness, and Strong Convexity}
This section presents core concepts used throughout this work.  
We emphasize that we define both L-smoothness and K-strong convexity through a general norm $\|\cdot \|$, not just via the 2-norm, as is common in some literature.



\begin{lemma}[\cite{hiriart2004fundamentals}, Proposition 6.1.1]\label{lemma:monograd}
    For a convex differentiable function $f:\reals^{d}
    \to (-\infty ,+\infty ]$, it holds that $\langle \nabla f(x)-\nabla f(y), x-y \rangle \geq 0$ $\forall$ $x,y\in\reals^{d}$.
\end{lemma}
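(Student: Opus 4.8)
The plan is to reduce the claim to the first-order (subgradient) characterization of convexity and then exploit symmetry. First I would note that since $f$ is differentiable everywhere on $\reals^{d}$, it is in fact real-valued on $\reals^{d}$, so there are no domain issues. The key tool is the standard gradient inequality for a convex differentiable function: for all $x,y\in\reals^{d}$,
\[
f(y)\;\ge\; f(x)+\langle \nabla f(x),\, y-x\rangle .
\]
If one prefers not to invoke this as a black box, it follows by restricting $f$ to the segment between $x$ and $y$: the map $t\mapsto \varphi(t)=f\bigl(x+t(y-x)\bigr)$ is convex and differentiable on $[0,1]$ with $\varphi'(0)=\langle\nabla f(x),y-x\rangle$, and convexity of $\varphi$ gives $\varphi(1)\ge\varphi(0)+\varphi'(0)$, which is exactly the displayed inequality.

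Next I would write the gradient inequality twice, once at the pair $(x,y)$ and once with the roles of $x$ and $y$ swapped:
\[
f(y)\ge f(x)+\langle \nabla f(x),\, y-x\rangle,
\qquad
f(x)\ge f(y)+\langle \nabla f(y),\, x-y\rangle .
\]
Adding these two inequalities, the terms $f(x)+f(y)$ cancel from both sides, leaving
\[
0\;\ge\; \langle \nabla f(x),\, y-x\rangle+\langle \nabla f(y),\, x-y\rangle
\;=\; -\,\langle \nabla f(x)-\nabla f(y),\, x-y\rangle ,
\]
and rearranging gives $\langle \nabla f(x)-\nabla f(y),\, x-y\rangle\ge 0$, as claimed.

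I do not anticipate a genuine obstacle here; the only point that needs care is justifying the gradient inequality for convex differentiable functions, and the one-dimensional reduction above makes that self-contained. Everything else is a symmetric addition and a sign rearrangement.
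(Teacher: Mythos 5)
Your proof is correct and is the standard argument for this result: the paper itself gives no proof, simply citing Proposition 6.1.1 of \cite{hiriart2004fundamentals}, and the cited proof is exactly this symmetric application of the first-order gradient inequality followed by addition. The one-dimensional reduction you include to justify the gradient inequality is a nice touch that makes the argument self-contained, but nothing further is needed.
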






\begin{definition}[Bregman divergence, L-smoothness, \& K-strongly convex]\label{def:bregsmoothstrong}
For a differentiable function $f:\reals^{d} \to  (-\infty ,+\infty ] $, we define the Bregman divergence as $D_{f}(x,y)=f(x)-[f(y)+\langle \nabla f (y), x-y \rangle ]$. 
If $f:\reals^{d} \to  (-\infty ,+\infty ]$ is convex and differentiable, we say that $f$ is $L$-smooth w.r.t. $||\cdot ||$ if $\forall$ $x,y\in \reals^{d}$ it holds that $D_f(x,y)\leq \frac{L}{2}\|x-y\|^{2}$ where $L\geq 0$.
If $f:\reals^{d} \to  (-\infty ,+\infty ]$ is convex and differentiable, we say that $f$ is K-strongly convex w.r.t. $||\cdot ||$ if $\forall$ $x,y\in \reals^{d}$ it holds that $\frac{K}{2}\|x-y\|^{2}\leq D_f(x,y)$ where $K\geq 0$.

\end{definition}

Conceptually, K-strongly convex acts as a quadratic lower bound of a convex function, while L-smoothness serves as a quadratic upper bound of a convex function. 
Finally, we present a theorem demonstrating the dual relationship between strong convexity and smoothness.


\begin{theorem}[\cite{kakade2009duality}, Theorem 6]\label{thm:dualsmoothstr}
Assume that f is a closed and convex function. Then $f$ is K-strongly convex w.r.t. a norm  $\|\cdot  \|$ i.f.f. $f^*$ is $\frac{1}{K}$-smooth
w.r.t. the dual norm $\|\cdot \|_{*}$.
\end{theorem}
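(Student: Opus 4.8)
\noindent\textit{Proof proposal.} The plan is to prove both implications at once by translating between strong convexity and smoothness through \emph{first-order} (gradient) characterizations, using the Fenchel conjugacy inversion $y=\nabla f(x)\iff x=\nabla f^{*}(y)$, which is available because $f$ is closed and convex (so $f=f^{**}$ and $\nabla f^{*}\circ\nabla f=\mathrm{id}$ on the relevant domain). Concretely, I would reduce the theorem to two standard equivalences, each proved by elementary calculus with the Bregman divergence of Definition~\ref{def:bregsmoothstrong}. \textbf{(i) Gradient form of strong convexity:} a convex differentiable $g$ is $K$-strongly convex w.r.t.\ $\|\cdot\|$ iff $\langle\nabla g(x)-\nabla g(y),x-y\rangle\ge K\|x-y\|^{2}$ for all $x,y$; the forward direction follows by adding $D_g(x,y)\ge\tfrac K2\|x-y\|^{2}$ and $D_g(y,x)\ge\tfrac K2\|x-y\|^{2}$ and noting $D_g(x,y)+D_g(y,x)=\langle\nabla g(x)-\nabla g(y),x-y\rangle$, and the reverse by writing $D_g(x,y)=\int_0^1\langle\nabla g(y+t(x-y))-\nabla g(y),x-y\rangle\,dt$ and bounding along the segment. \textbf{(ii) Co-coercivity form of smoothness:} a convex differentiable $g$ is $L$-smooth w.r.t.\ $\|\cdot\|$ iff $\langle\nabla g(x)-\nabla g(y),x-y\rangle\ge\tfrac1L\|\nabla g(x)-\nabla g(y)\|_{*}^{2}$ for all $x,y$.

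For (ii), co-coercivity $\Rightarrow$ smoothness is easy: Hölder turns it into the Lipschitz-gradient bound $\|\nabla g(x)-\nabla g(y)\|_{*}\le L\|x-y\|$, which integrated as in (i) gives $D_g(x,y)\le\tfrac L2\|x-y\|^{2}$. The converse is the technical heart: fix $x$, let $\phi(z)=g(z)-\langle\nabla g(x),z\rangle$, which is convex, $L$-smooth, and minimized at $z=x$; minimizing the smoothness upper bound $\phi(y)+\langle\nabla\phi(y),z-y\rangle+\tfrac L2\|z-y\|^{2}$ over $z$ (its minimum is $\phi(y)-\tfrac1{2L}\|\nabla\phi(y)\|_{*}^{2}$, attained by stepping in the norm-dual direction to $\nabla\phi(y)$) yields $D_g(y,x)\ge\tfrac1{2L}\|\nabla g(x)-\nabla g(y)\|_{*}^{2}$; symmetrizing in $x,y$ and adding gives co-coercivity.

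Granting (i) and (ii), the theorem is a substitution. Apply (i) to $f$ with constant $K$, and apply (ii) to $g=f^{*}$ with the roles of $\|\cdot\|$ and $\|\cdot\|_{*}$ interchanged (legitimate since the bidual norm equals $\|\cdot\|$ on $\reals^{d}$), so that $f^{*}$ is $\tfrac1K$-smooth w.r.t.\ $\|\cdot\|_{*}$ iff $\langle\nabla f^{*}(u)-\nabla f^{*}(v),u-v\rangle\ge K\|\nabla f^{*}(u)-\nabla f^{*}(v)\|^{2}$ for all $u,v$. Substituting $u=\nabla f(a)$, $v=\nabla f(b)$ and using $\nabla f^{*}(\nabla f(a))=a$, this last inequality becomes exactly $\langle\nabla f(a)-\nabla f(b),a-b\rangle\ge K\|a-b\|^{2}$, which by (i) is equivalent to $f$ being $K$-strongly convex. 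Chaining the equivalences proves both directions simultaneously.

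The main obstacle is not any single inequality but the regularity bookkeeping around $\nabla f^{*}\circ\nabla f=\mathrm{id}$: one must know $\nabla f^{*}$ is single-valued exactly where it is invoked. The ``$\Rightarrow$'' direction is clean — strong convexity of $f$ forces $x\mapsto\langle x,y\rangle-f(x)$ to be coercive, hence the supremum defining $f^{*}(y)$ is attained; strict concavity makes the maximizer unique; and uniqueness of the maximizer yields differentiability of $f^{*}$ on all of $\reals^{d}$. For ``$\Leftarrow$'' one leans on the ``closed and convex'' hypothesis (hence $f=f^{**}$) together with the standing differentiability convention in Definition~\ref{def:bregsmoothstrong}. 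I would first carry out the argument under these non-degeneracy conditions and then invoke \cite{kakade2009duality} for the fully general statement.
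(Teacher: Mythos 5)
The paper does not prove this statement at all: Theorem~\ref{thm:dualsmoothstr} is imported verbatim from \cite{kakade2009duality} (their Theorem~6), so there is no in-paper argument to compare against. Your proposal is a correct, essentially self-contained derivation by a route that is standard but different from the one in the cited source: Kakade et al.\ argue by directly conjugating the quadratic minorant/majorant (working with subgradients and the Fenchel--Young equality), whereas you pass through the two first-order characterizations --- gradient monotonicity $\langle\nabla g(x)-\nabla g(y),x-y\rangle\ge K\|x-y\|^2$ for strong convexity and co-coercivity $\langle\nabla g(x)-\nabla g(y),x-y\rangle\ge\tfrac1L\|\nabla g(x)-\nabla g(y)\|_*^2$ for smoothness --- and then translate one into the other via $\nabla f^*=(\nabla f)^{-1}$. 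Both halves of (i) and (ii) check out (the symmetrization identity $D_g(x,y)+D_g(y,x)=\langle\nabla g(x)-\nabla g(y),x-y\rangle$, the line-integral bounds, and the computation $\inf_h\langle a,h\rangle+\tfrac L2\|h\|^2=-\tfrac1{2L}\|a\|_*^2$ are all valid for a general norm in $\reals^d$), and your translation step correctly uses surjectivity of $\nabla f$ (from coercivity of $x\mapsto\langle x,y\rangle-f(x)$ under strong convexity) in one direction and $f=f^{**}$ plus single-valuedness of $\partial f^*$ in the other. What your approach buys is an elementary, gradient-level proof that meshes with the paper's Definition~\ref{def:bregsmoothstrong}; what it costs is exactly the regularity bookkeeping you flag: under the paper's differentiability-based definitions, the reverse implication (smoothness of $f^*$ implies strong convexity of $f$) can produce a nondifferentiable $f$ (e.g.\ when $\nabla f^*$ is not injective), so the fully general statement genuinely requires the subgradient formulation of \cite{kakade2009duality}; deferring that case to the citation, as you do, is the right call.
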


\subsection{Automated Market Makers for AD Securities}  

We first introduce the general framework for an automated market maker.
\begin{definition}[Automated Market Maker for AD Securities]
  Say we have a finite random variable $\Y =\{y_1, \dots ,y_d\}$ over $d$ mutually exclusive and exhaustive outcomes.
  An automated market maker for AD securities, with initial state $q_0\in \reals^{d}$, operates as follows.  
   At round $t\in \mathbb{N}_0 $,
    \begin{enumerate}
       \item A trader can request any bundle of securities $r_t\in\reals^{d}$.
       \item The trader pays the market maker some amount $\text{Pay}(q_t,r_t)\in\reals$ in cash.
       \item The market state updates to $q_{t+1}=q_{t}+r_t$.
   \end{enumerate}
After an outcome of the form $Y=y_i$ occurs, for each round $t$, the trader responsible for the trade $r_t$ is paid $(r_t)_i$ in cash, i.e. the number of shares purchased in outcome $y_i$.
The market payout for the bundle $r_t$ and the outcome $Y=y$ is expressed via $\langle r_t,\rho (y) \rangle$ where $\rho :\Y \to \delta_{y}$. 
At any state $q_t$, the market maker can infer the belief of the market via the instsantaneous price $\text{InstPrice}(q_t) = f(q_t,0)$ where $f(q_t,r_t) = \nabla_{r_t} \text{Pay}(q_t,r_t)$.
\end{definition}


Intuitively, the instantaneous price of security $i$ is the amount that would be paid for an arbitrarily small amount of security $i$.
We now define a particular family of automated market makers, which captures the DCFMM and the Smooth Quadratic Prediction Market.


\begin{definition}[Price-Plus-Fee Market]\label{def:lfdecomp}
    A Price-Plus-Fee Market is an automated market maker of the following form.
    Let $C:\reals^{d}\to\reals$ be \textbf{CIIP}. 
    Then
    $$\text{Pay}(q_t,r_t)= \langle \nabla C(q_t),r_t \rangle +\text{Fee}(q_t,r_t)= \langle p_t,r_t \rangle 
+\text{Fee}(q_t,r_t)~$$
    where $\text{Fee}(q_t,r_t) = o(\|r_t\|)$.
    We note that in this case, $\text{InstPrice}(q_t) = p_t = \nabla C(q_t)$.
\end{definition}

Given that both markets we examine in this work are Price-Plus-Fee markets, we shall use $\nabla C(q_t)$ and $\text{InstPrice}(q_t)$ interchangeably.

Generally, it is common practice to expect prediction markets to satisfy some form of the following axioms.

\begin{axiom}\label{axiom:price}(Existence of Instantaneous Price): $C$ is continuous and differentiable everywhere on $\reals^{d}$.
\end{axiom}

\begin{axiom}\label{axiom:info}(Information Incorporation): for any  $q,r\in \reals^{d}$, 
$\text{Pay}(q+r,r)\geq \text{Pay}(q,r)$.
\end{axiom}

\begin{axiom}\label{axiom:noarb}(No Arbitrage) For all $q,r\in\reals^{d}$, $\exists$ $y\in\Y$ such that $\text{Pay}(q,r)\geq \langle r, \rho (y) \rangle$.
\end{axiom}

\begin{axiom}\label{axiom:express}(Expressiveness): For any $p\in\Delta_{d}$ and $\epsilon >0$, $\exists$ $q\in\reals^{d}$ such that $|| \text{InstPrice}(q)-p  || <\epsilon $.
\end{axiom}

\begin{axiom}\label{axiom:incentive}(Incentive Compatibility): Assume that the market is at state $q_t$ and that the agent has a belief $\mu\in\Delta_{d}$.
To maximize expected return \begin{equation}\label{eq:expecteddcfmm}
    \begin{aligned}
    \argmax_{r_t\in\reals^{d}} \quad & \underset{\text{Expected Payout}}{\underbrace{\langle \mu ,r_t \rangle}} - 
    \underset{\text{Payment to Market}}{\underbrace{\text{Pay}(q_t,r_t)}},
    \end{aligned}
    \end{equation} the agent will purchase a bundle $r_{t}$ such that for $q_{t+1}=q_t+r_t$ it holds that $\text{InstPrice}(q_{t+1})=\mu$.
\end{axiom}

\paragraph{Intuition of Axioms} 
Axiom \ref{axiom:price} states that any market state can mapped to a distribution.
Axiom \ref{axiom:info} states that if a trader were to purchase the same bundle twice, the price would be larger the second time. 
Axiom \ref{axiom:noarb} states that for any bundle purchase there exists an outcome where the trader losses money. 
Axiom \ref{axiom:express} states that the state of the market can mapped to a distribution that expresses a belief arbitrarily close.
Axiom \ref{axiom:incentive} states that a trader desires to move the market to a state such that $\nabla C$ maps the market state to their belief of the distribution of $\Y$.
See \cite{abernethy2013efficient} for further discussion regarding the intuition of axioms.


\subsection{\textbf{CIIP} Construction}
It may seem unclear how one could construct a $C$ which is \textbf{CIIP}.
We later show that via a $C$ being \textbf{CIIP}, a Price-Plus-Fee Market is able to automatically satisfy many of our desired axioms
In this subsection we provide supporting lemmas for constructing \textbf{CIIP} functions and provide some examples. 

\begin{lemma}[\cite{abernethy2013efficient}, Theorem 4.2, Lemma 4.3]\label{lem:ciipconst} 
Let $\hat{C}:\reals^d\to \reals$ be defined over $\relint (\Delta_d )$, $\hat{C}$ be strictly convex over its domain, and define $C=\hat{C}^*$.
As $\hat{C}$ is strictly convex, $\nabla C (q)=\argmax_{p\in \dom(\hat{C})} \langle q,p \rangle - \hat{C}(p)$. 
Furthermore, assuming $\dom (\hat{C})$ is restricted to either $\relint (\Delta_d) $ or $\Delta_d$, we also have $\text{closure}(\{\nabla C(q)\mid q\in\reals^{d} \})=\Delta_d$.
\end{lemma}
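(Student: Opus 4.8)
The plan is to derive all three conclusions from the \emph{conjugate subgradient theorem} (the Fenchel--Young equality) applied to the pair $\hat C$, $C=\hat C^*$, where $\hat C$ is interpreted as $+\infty$ off $\dom(\hat C)$. First I would record the chain of equivalences, valid whenever $\hat C$ is closed proper convex:
$$p\in\partial C(q)\iff q\in\partial\hat C(p)\iff \langle q,p\rangle = C(q)+\hat C(p)\iff p\in\argmax_{p'\in\dom(\hat C)}\langle q,p'\rangle-\hat C(p').$$
If $\hat C$ is not a priori lower semicontinuous on $\Delta_d$, one replaces it by its closure; since $C=\hat C^*=(\mathrm{cl}\,\hat C)^*$ nothing changes, and this is precisely why the statement restricts $\dom(\hat C)$ to $\relint(\Delta_d)$ or $\Delta_d$. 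Consequently the maximizer set in the claimed formula is exactly $\partial C(q)$, and it remains to show this set is a singleton for every $q$, i.e. that $C$ is differentiable on all of $\reals^d$.

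For differentiability I would proceed in two steps. (i) $\dom(C)=\reals^d$: being finite-valued and convex on the bounded set $\relint(\Delta_d)$, $\hat C$ is bounded below there (by any of its affine minorants, which are bounded on a bounded set), so $C(q)=\sup_p \langle q,p\rangle-\hat C(p)<\infty$ for every $q$; hence $\inter(\dom C)=\reals^d$. (ii) Since $\hat C$ is strictly convex on its domain it is essentially strictly convex, so by the strict-convexity/essential-smoothness duality (the qualitative counterpart of Theorem~\ref{thm:dualsmoothstr}) its conjugate $C$ is essentially smooth; combined with (i) this makes $C$ differentiable at every $q\in\reals^d$, so $\partial C(q)=\{\nabla C(q)\}$. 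Feeding this into the equivalences above yields $\nabla C(q)=\argmax_{p\in\dom(\hat C)}\langle q,p\rangle-\hat C(p)$ with the argmax attained and unique. I would flag here the one genuinely subtle point: the supremum defining $C$ is taken over the possibly non-closed set $\dom(\hat C)$, so attainment does \emph{not} follow from compactness --- it follows instead from $\partial C(q)\neq\emptyset$ via Fenchel--Young.

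For the closure statement I would prove the two inclusions. The easy direction: every maximizer of $\langle q,\cdot\rangle-\hat C$ lies in $\dom(\hat C)\subseteq\Delta_d$, so $\{\nabla C(q)\mid q\in\reals^d\}\subseteq\Delta_d$, and since $\Delta_d$ is closed, $\text{closure}(\{\nabla C(q)\mid q\in\reals^d\})\subseteq\Delta_d$. For the reverse, fix $p\in\relint(\Delta_d)$; in both allowed cases $p$ lies in the relative interior of $\dom(\hat C)$, so $\partial\hat C(p)\neq\emptyset$. Picking any $q\in\partial\hat C(p)$, the equivalences give $p\in\partial C(q)=\{\nabla C(q)\}$, hence $p\in\{\nabla C(q)\mid q\in\reals^d\}$. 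Thus $\relint(\Delta_d)\subseteq\{\nabla C(q)\mid q\in\reals^d\}$, and taking closures, $\Delta_d=\text{closure}(\relint(\Delta_d))\subseteq\text{closure}(\{\nabla C(q)\mid q\in\reals^d\})\subseteq\Delta_d$, which is the claim.

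The main obstacle is really just boundary bookkeeping: establishing closedness/lower semicontinuity of $\hat C$ on its non-open domain and the attainment of the supremum (which is what \cite{abernethy2013efficient} handle via the restriction of $\dom(\hat C)$). Once those technicalities are in place, all three conclusions --- the gradient formula, uniqueness of the maximizer, and the closure identity --- are immediate consequences of Fenchel duality together with the essential-smoothness/strict-convexity correspondence.
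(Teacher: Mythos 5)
The paper does not actually prove this lemma---it is imported verbatim as a citation to \cite{abernethy2013efficient} (their Theorem~4.2 and Lemma~4.3), so there is no in-paper proof to compare against. Your argument is, however, essentially the standard proof used in that reference: the conjugate subgradient (Fenchel--Young) equivalences identify the argmax set with $\partial C(q)$, the essential-strict-convexity/essential-smoothness duality together with $\dom(C)=\reals^d$ gives differentiability and hence uniqueness of the maximizer, and nonemptiness of $\partial\hat C(p)$ for $p\in\relint(\Delta_d)$ gives surjectivity of $\nabla C$ onto a dense subset of $\Delta_d$, from which the closure identity follows. The one step a fully rigorous writeup must still discharge---and which you correctly flag as ``boundary bookkeeping''---is the implication from ``$\hat C$ strictly convex on its domain'' to ``$\mathrm{cl}\,\hat C$ essentially strictly convex,'' since taking the closure can in principle alter values on the relative boundary of $\Delta_d$; in the paper's actual use case $\hat C$ is additionally assumed continuous on $\relint(\Delta_d)$ and the domain is restricted exactly so that this passage is harmless. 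With that caveat noted, your proposal is correct and matches the approach of the cited source.
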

Note, if $\hat{C}:\reals^d\to \reals$ was $\frac{1}{L}$-strongly convex w.r.t. $\|\cdot \|$ then $C$ would be $L$-smooth w.r.t. $\|\cdot \|_*$ via Theorem \ref{thm:dualsmoothstr}.

A $C$ which is one-invariant allows for $\text{InstPrice}(q_t)=\text{InstPrice}(q_{t+1})$ where $q_{t+1}=q_t+\alpha \ones$ such that $\alpha\in\reals$.
This is a desirable property since the purchase of $q_{t+1}=q_t+\alpha \ones$ incorporates information uniformly. 
\begin{lemma}
    W.r.t. a function $f$ restricted to the $\Delta_d$ (or $\relint (\Delta_d)$), the Fenchel conjugate $f^*$ is one-invariant.
\end{lemma}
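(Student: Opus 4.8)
The plan is to simply unfold the definition of the Fenchel conjugate and exploit the fact that every point in the constraint set sums to one. First I would fix an arbitrary $y\in\reals^d$ and $\alpha\in\reals$ and write $f^*(y+\alpha\ones)=\sup_{x\in\dom(f)}\langle y+\alpha\ones,x\rangle-f(x)$. Since $\dom(f)\subseteq\Delta_d$ (the same applies verbatim if $\dom(f)\subseteq\relint(\Delta_d)$), every feasible $x$ is a probability vector, so $\langle\ones,x\rangle=\sum_i x_i=1$. Hence $\langle y+\alpha\ones,x\rangle=\langle y,x\rangle+\alpha$, and this additive shift by $\alpha$ is \emph{uniform} over the entire feasible set.

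Next I would pull the now-constant term $\alpha$ out of the supremum: $\sup_{x\in\dom(f)}\bigl(\langle y,x\rangle+\alpha-f(x)\bigr)=\alpha+\sup_{x\in\dom(f)}\bigl(\langle y,x\rangle-f(x)\bigr)=\alpha+f^*(y)$. This yields $f^*(y+\alpha\ones)=f^*(y)+\alpha$, which is precisely the definition of 1-invariance. Since $y$ and $\alpha$ were arbitrary, the claim follows.

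There is essentially no technical obstacle; the only point requiring care is the hypothesis itself — the argument uses $\langle\ones,x\rangle=1$ at every point over which the supremum is taken, so it is crucial that $f$ be restricted to $\Delta_d$ (or $\relint(\Delta_d)$) rather than defined on all of $\reals^d$, where the identity would fail. It is worth noting that this lemma dovetails with Lemma \ref{lem:ciipconst}: if $\hat C$ has $\dom(\hat C)$ restricted to $\Delta_d$ or $\relint(\Delta_d)$, then $C=\hat C^*$ is 1-invariant, supplying one of the four properties needed for $C$ to be \textbf{CIIP}.
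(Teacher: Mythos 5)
Your proof is correct and is essentially identical to the paper's: both unfold the definition of the Fenchel conjugate, use $\langle \ones, p\rangle = 1$ for every $p$ in the (relative interior of the) simplex, and pull the resulting constant $\alpha$ out of the supremum. No gaps; the remark about why the restriction to $\Delta_d$ is essential is a nice addition but not a departure from the paper's argument.
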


\begin{proof} Let $q \in \reals^d$ and $\alpha \in \reals$. Observe the following $f^*(q+\alpha \ones ) = \sup_{p\in\Delta_d}\langle q+\alpha \ones,p \rangle -f(p) = \sup_{p\in\Delta_d}\langle q,p \rangle -f(p)+ \alpha\langle  \ones,p \rangle 
         = f^*(q)+\alpha$.\end{proof}

We proceed to give examples of $\hat{C}$ and the corresponding $C$ which satisfies \textbf{CIIP}.
For the case of $\hat{C} (p)=L^{-1} \sum_{i=1}^{d} p_i\log p_i $, the derived $C$ is softmax.
\paragraph{Softmax} Let $\hat{C}(p)=L^{-1}\sum_{i=1}^{d}p_i\log p_i$. 
Then $C(q)=\sup_{p\in\relint(\Delta_d)} \langle q,p \rangle - \hat{C}(p)= L^{-1}\log (\sum_{i=1}^{n}e^{L q_i})$.
Furthermore, $\nabla C(q)=\argmax_{p\in\relint (\Delta_d)} \langle q,p\rangle -\hat{C}(p)= \frac{e^{Lq}}{\sum_{i=1}^ne^{Lq_i}}$ where $C(q)$ is L-smooth w.r.t. $\|\cdot \|_2$ and $\|\cdot \|_\infty$. Also, it holds that $\{\nabla C(q)\mid q\in\reals^{d}\}=\text{int}(\Delta_d)$ and thus $\text{closure}(\{\nabla C(q)\mid q\in\reals^{d}\} ) = \Delta_d$.

For the case of $\hat{C}(p)=\frac{L}{2}\|p\|^{2}_{2}$, the derived $C$ is referred to in the machine learning literature as sparsemax \citep{martins2016softmax,niculae2017regularized}.

\paragraph{Sparsemax} Let $\hat{C}(p)=\frac{L}{2}\|p\|^{2}_{2}$ for $L>0$. 
Then $C(q)=\sup_{p\in \Delta_d} \langle q,p \rangle - \hat{C}(p)=  \sup_{p\in \Delta_d} \langle q,p \rangle -\frac{L}{2}\|p\|^{2}_{2}$.
Furthermore, $\nabla C(q)=\argmax_{p\in\Delta_d} \langle q,p\rangle -\hat{C}(p)= \argmin_{p\in\Delta_d}\| q-\frac{p}{L}\|_{2}^{2}$ where $C(q)$ is $\frac{1}{L}$-smooth w.r.t. $\|\cdot \|_2$. Also, it holds that $\{\nabla C(q)\mid q\in\reals^{d}\}= \Delta_d$ and thus $\text{closure}(\{\nabla C(q)\mid q\in\reals^{d}\} ) = \Delta_d$.

\subsection{Duality-based CFMM}
We now define the DCFMM which provides a construction scheme for a cost function $C$ respectively and a particular $\text{Pay}(\cdot,\cdot)$ scheme. 

\begin{definition}[\cite{abernethy2013efficient}, DCFMM for AD Securities]
Let $C:\reals^{d}\to\reals$ be \textbf{CIIP} such that $C:=\hat{C}^*$ where $\hat{C}$ is strictly convex and continuous over all of $\relint (\Delta_d)$.
At state $q_t$ for bundle $r_t$, we define the payment of a DCFMM by $\text{Pay}_D(q_t,r_t)=C(q_t+r_t)-C(q_t)$.
\end{definition}

With respect to the Price-Plus-Fee Market (Definition \ref{def:lfdecomp}), one could think of the DCFMM market payment as a linear term plus an implicit fee based on the Bregman divergence by
$$\text{Pay}_{D}(q_t,r_t)=C(q_{t+1})-C(q_t)
=  \langle p_t,r_t \rangle 
+\underset{\text{Breg.-Fee}}{\underbrace{ D_C(q_{t+1}, q_t) }}$$
since $D_{C}(q_{t+1},q_{t})=C(q_{t+1})-C(q_{t})-\langle p_t,r_t \rangle$ and $q_{t+1}=q_t+r_t$.
Intuitively, the DCFMM charges the instantaneous price per share in a bundle $r_t$ via $\langle p_t,r_t\rangle$ and then charges a a ``curvature fee'' via the Bregman Fee to hedge the potential loss for large trades.
The DCFMM satisfies Axioms 1-5, which we now show. 

\begin{theorem}[\cite{abernethy2013efficient}, Theorem 3.2]
The DCFMM satisfies Axioms 1-4.
\end{theorem}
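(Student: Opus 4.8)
The plan is to verify Axioms~\ref{axiom:price}--\ref{axiom:express} one at a time for the DCFMM, in each case reducing the claim to one of the defining \textbf{CIIP} properties of $C$ together with the closed form $\text{Pay}_D(q,r) = C(q+r) - C(q)$. The point to keep in mind is that the \textbf{CIIP} conditions (convex, increasing, 1-invariant, probability mapping) were chosen precisely so that these four axioms fall out almost immediately, so the proof is mostly bookkeeping.

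For Axiom~\ref{axiom:price}, I would note that ``probability mapping'' already requires $C$ to be twice-differentiable on all of $\reals^d$, which gives existence of instantaneous price outright; alternatively, one can argue from the construction $C = \hat C^*$: since $\dom(\hat C) \subseteq \Delta_d$ is bounded, $C$ is finite (hence convex) everywhere, and strict convexity of $\hat C$ forces $C$ to be differentiable on $\reals^d$ by the standard duality between strict convexity and smoothness of the conjugate. For Axiom~\ref{axiom:info}, I would unwind the definition: $\text{Pay}_D(q+r,r) \geq \text{Pay}_D(q,r)$ is the inequality $C(q+2r) - C(q+r) \geq C(q+r) - C(q)$, equivalently $\tfrac12 C(q+2r) + \tfrac12 C(q) \geq C(q+r)$; since $q+r$ is the midpoint of $q$ and $q+2r$, this is exactly midpoint convexity of $C$.

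For Axiom~\ref{axiom:noarb}, fix $q,r \in \reals^d$ and pick $y \in \Y$ with $\langle r, \rho(y)\rangle = \min_{y' \in \Y}\langle r, \rho(y')\rangle = \min_i r_i$. The gradient inequality for the convex function $C$ gives $\text{Pay}_D(q,r) = C(q+r) - C(q) \geq \langle \nabla C(q), r\rangle$, and since $C$ is a probability mapping, $\nabla C(q) \in \Delta_d$, so $\langle \nabla C(q), r\rangle = \sum_i \nabla C(q)_i\, r_i$ is a convex combination of the entries of $r$ and hence at least $\min_i r_i$; chaining the two bounds yields the axiom. For Axiom~\ref{axiom:express}, recall that for a Price-Plus-Fee market $\text{InstPrice}(q) = \nabla C(q)$, and the probability mapping clause states $\text{closure}(\{\nabla C(q) \mid q \in \reals^d\}) = \Delta_d$; hence every $p \in \Delta_d$ is a limit point of $\{\nabla C(q)\}$, so for each $\epsilon > 0$ there is $q$ with $\|\text{InstPrice}(q) - p\| < \epsilon$.

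I do not expect a genuine obstacle here: Axioms~\ref{axiom:price} and \ref{axiom:express} are essentially restatements of \textbf{CIIP} clauses, Axiom~\ref{axiom:info} is pure convexity, and Axiom~\ref{axiom:noarb} is the gradient inequality plus $\nabla C(q) \in \Delta_d$. The only spot with real (but classical) content is if one declines to simply invoke the twice-differentiability built into ``probability mapping'' and instead derives Axiom~\ref{axiom:price} from the conjugate construction, where one must carefully use that strict convexity of $\hat C$ on its bounded domain makes $C = \hat C^*$ finite and differentiable on the whole of $\reals^d$; I would treat that as the most delicate step and otherwise keep the argument terse.
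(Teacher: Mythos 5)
Your verification is correct, and it follows the same route as the cited source (Abernethy et al., Theorem 3.2), which the paper itself does not reprove: Axioms~\ref{axiom:price} and \ref{axiom:express} are read off from the probability-mapping clause of \textbf{CIIP}, Axiom~\ref{axiom:info} is convexity of $C$ (your midpoint reformulation is a clean way to see it), and Axiom~\ref{axiom:noarb} follows from the gradient inequality together with $\nabla C(q)\in\Delta_d$, just as in the original (which integrates $\nabla C$ along the segment rather than evaluating at the endpoint — an immaterial difference). No gaps.
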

Trivially, with $\text{Pay}_D$ plugged into Eq (\ref{eq:expecteddcfmm}) and taking the gradient with respect to $q_{t+1}$ and setting it equal to zero, we can see that Axiom \ref{axiom:incentive} is also satisfied for the DCFMM.
Furthermore, the DCFMM has the nice property that the worst-case loss of the market is bounded.
\begin{theorem}[\cite{abernethy2013efficient}, Theorem 4.4]
The DCFMM has a worst-case loss no more than $\sup_{p\in\rho(\Y )}R(p)-\min_{p\in\Delta_d}R(p)$.
\end{theorem}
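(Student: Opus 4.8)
The plan is to track the market maker's net cash position over an arbitrary finite trade sequence and bound it via Fenchel duality. Fix trades $r_0,\dots,r_{T-1}$ producing states $q_0,q_1,\dots,q_T$ with $q_{t+1}=q_t+r_t$, starting from the standard initialization $q_0=\vec 0$, and suppose outcome $Y=y$ is realized. The market maker collects $\sum_{t}\text{Pay}_D(q_t,r_t)$ in cash and pays out $\sum_{t}\langle r_t,\rho(y)\rangle=\langle q_T-q_0,\rho(y)\rangle$, so its loss is $\langle q_T,\rho(y)\rangle-\sum_{t}\text{Pay}_D(q_t,r_t)$. Since $\text{Pay}_D(q_t,r_t)=C(q_{t+1})-C(q_t)$, this sum telescopes to $C(q_T)-C(q_0)$, and the loss equals $\langle q_T,\rho(y)\rangle-C(q_T)+C(q_0)$. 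This already reduces the bound to a quantity depending only on the final state and the realized outcome, not on the trading path.

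Next I would use the conjugacy $C=\hat C^{*}$ (the DCFMM is built from a closed, strictly convex $\hat C$, which plays the role of $R$ in the statement). Because $\hat C$ is closed and convex, $C^{*}=\hat C$, so for every $q$ and every $y$ we have $\langle q,\rho(y)\rangle-C(q)\le\sup_{q'}\big(\langle q',\rho(y)\rangle-C(q')\big)=C^{*}(\rho(y))=\hat C(\rho(y))$, using that $\rho(y)=\delta_y\in\Delta_d$ lies in (the closure of) $\dom\hat C$. Similarly $C(q_0)=C(\vec 0)=\hat C^{*}(\vec 0)=\sup_{p\in\Delta_d}\big(-\hat C(p)\big)=-\min_{p\in\Delta_d}\hat C(p)$, with the minimum attained by compactness of $\Delta_d$ and lower semicontinuity of $\hat C$. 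Plugging these in, the loss is at most $\hat C(\rho(y))-\min_{p\in\Delta_d}\hat C(p)$; maximizing over the finitely many outcomes $y\in\Y$ yields $\sup_{p\in\rho(\Y)}\hat C(p)-\min_{p\in\Delta_d}\hat C(p)$, which is exactly the claimed bound.

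The step I expect to be the main obstacle is the boundary behavior of $\hat C$: the DCFMM definition only requires $\hat C$ to be strictly convex and continuous on $\relint(\Delta_d)$, whereas each $\rho(y)=\delta_y$ is a \emph{vertex} of the simplex, so $\hat C(\rho(y))$ must be interpreted as the value of the closed (lower-semicontinuous) extension of $\hat C$ to all of $\Delta_d$. Rather than invoking $C^{*}=\hat C$ at $\delta_y$ directly, I would argue by approximation: take $p_n\in\relint(\Delta_d)$ with $p_n\to\delta_y$, use the Fenchel--Young inequality $\langle q,p_n\rangle-\hat C(p_n)\le C(q)$ for all $q$, and pass to the limit, which is valid provided $\liminf_n\hat C(p_n)$ is finite, i.e., $\hat C$ extends to the vertices with finite value (true for negative entropy under the convention $0\log 0=0$, and for the squared $2$-norm). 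The only other point to nail down is the initialization: the clean identity $C(q_0)=-\min_p\hat C(p)$ uses $q_0=\vec 0$; for a general $q_0$ one retains the extra term $C(q_0)-\langle q_0,\rho(y)\rangle$, so either the convention $q_0=\vec 0$ is assumed or the statement is understood relative to it.
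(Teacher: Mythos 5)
Your proof is correct, and it is essentially the classical argument from the cited source (Abernethy et al., Theorem~4.4): telescope the payments to $C(q_T)-C(q_0)$, bound $\langle q_T,\rho(y)\rangle - C(q_T)$ by the conjugate value $\hat C(\rho(y))$ via Fenchel--Young, and identify $C(\vec 0)=-\min_{p\in\Delta_d}\hat C(p)$. The paper itself does not re-prove the statement in the body --- it imports it by citation --- but its own derivation of the same bound lives in Appendix~\ref{app:cftrlequiv}, where the route is different: the payment $C(q_t+r_t)-C(q_t)$ is rewritten as a line integral $\int_0^1\langle \nabla C(q_t+sr_t),r_t\rangle\,ds$ via the Gradient Theorem, the instantaneous prices are identified with Continuous FTRL actions, and the worst-case loss is then read off from the CFTRL regret bound (Theorem~\ref{thm:contregret}), yielding $R_{\max}-R_{\min}$. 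Your approach is more elementary and self-contained (no continuous-time machinery), and it makes the role of the final state $q_T$ and of conjugate duality explicit; the paper's route buys the conceptual dividend that the loss bound \emph{is} a regret bound, which is the point of that appendix. Two details you flag are worth keeping: the boundary extension of $\hat C$ to the vertices $\delta_y$ (your $\liminf$ approximation is the right fix, and matches why the bound is stated with $\sup_{p\in\rho(\Y)}$ rather than $\max$), and the dependence on the initialization $q_0=\vec 0$, which the paper's general market definition (arbitrary $q_0$) leaves implicit; for general $q_0$ the bound picks up the extra term $C(q_0)-\langle q_0,\rho(y)\rangle$, exactly as you note.
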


As noted by \cite{abernethy2013efficient}, FTRL's regrate rate and DCFMM's worst-case loss are similar.
We show in Appendix \ref{app:cftrlequiv} the equivalence to Continuous FTRL's regret rate and DCFMM's worst-case loss.

\section{Smooth Quadratic Prediction Markets}




\subsection{Smooth Quadratic Prediction Market Design}
We now introduce the Smooth Quadratic Prediction Market.
Given a smooth \textbf{CIIP} function $C$ w.r.t. a general norm $\|\cdot \|$, we propose charging a fee based on the upper quadratic bound obtained via the $L$-smoothness condition.

\begin{definition}[Smooth Quadratic Prediction Market]
Let $C:\reals^{d}\to\reals$ be \textbf{CIIP}.
Assume that $C$ is L-smooth w.r.t. $\|\cdot \|$. 
At state $q_t$ for bundle $r_t$, we define the payment of a Smooth Quadratic Prediction Market by 
$$ \text{Pay}_L(q_t,r_t) =\langle p_t,r_t \rangle + \underset{\text{Q-Fee}}{\underbrace{ \frac{L}{2}\| r_t\|^{2} }} ~.$$
\end{definition}
In the remainder of this section, we show that the Smooth Quadratic Prediction Market satisfies Axioms 1-4 and has a better worst-case loss than the DCFMM.
Note, we don't claim that the Smooth Quadratic Prediction Market satisfies Axiom \ref{axiom:incentive} Incentive Compatibility.
Later in Section \ref{subsec:el2} and \ref{subsec:elp}, we show that the traders within the Smooth Quadratic Prediction Market satisfy instead a form of \textit{incremental} incentive compatibility, and interestingly, the traders mimic the update steps of gradient (general) steepest descent while trading.

\begin{lemma}
    By the assumption that $C$ is \textbf{CIIP}, the Smooth Quadratic Prediction Market satisfies Axiom \ref{axiom:price} Existence of Instantaneous Price. 
\end{lemma}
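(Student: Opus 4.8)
The plan is to show that Axiom~\ref{axiom:price} reduces entirely to a property of the cost function $C$, and that this property is an immediate consequence of the \textbf{CIIP} assumption. Recall that Axiom~\ref{axiom:price} asks that $C$ be continuous and differentiable everywhere on $\reals^{d}$. Since the Smooth Quadratic Prediction Market is a Price-Plus-Fee Market with the \textbf{CIIP} function $C$ built in by definition, there is nothing to verify beyond unpacking the meaning of \textbf{CIIP}.

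First I would recall the definition of \textbf{CIIP}: a function that is convex, increasing, $1$-invariant, and a probability mapping. I would then isolate the ``probability mapping'' clause, which states in particular that $f$ is \emph{twice-differentiable}. Twice-differentiability on all of $\reals^{d}$ immediately implies that $C$ is differentiable on $\reals^{d}$, and differentiability implies continuity. Hence $C$ is continuous and differentiable everywhere on $\reals^{d}$, which is exactly the content of Axiom~\ref{axiom:price}.

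A second, equally short remark I would include is that the instantaneous price is well-defined for this market: since $\text{Pay}_L(q_t,r_t) = \langle p_t, r_t\rangle + \tfrac{L}{2}\|r_t\|^2$ with $p_t = \nabla C(q_t)$, the map $r_t \mapsto \text{Pay}_L(q_t, r_t)$ is differentiable at $r_t = 0$ with gradient $\nabla C(q_t)$, so $\text{InstPrice}(q_t) = f(q_t, 0) = \nabla C(q_t)$ is well-defined precisely because $C$ is differentiable. This ties the axiom back to the Price-Plus-Fee framework (Definition~\ref{def:lfdecomp}), where $\text{InstPrice}(q_t) = \nabla C(q_t)$ was already recorded.

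There is essentially no obstacle here — the lemma is a definitional observation rather than a theorem with content. The only thing to be careful about is citing the correct clause of the \textbf{CIIP} definition (it is the ``probability mapping'' bullet, not the ``convex'' bullet, that supplies differentiability), and noting that twice-differentiable is stronger than what is needed, so the conclusion follows a fortiori. I would write the proof in two or three sentences.
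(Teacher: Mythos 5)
Your argument is correct and is exactly the definitional unpacking the paper intends: the lemma is stated without proof precisely because the ``probability mapping'' clause of \textbf{CIIP} already grants twice-differentiability of $C$ on all of $\reals^{d}$, which gives differentiability and continuity a fortiori. Your added remark that $r_t \mapsto \text{Pay}_L(q_t,r_t)$ is differentiable at $r_t=0$ with gradient $\nabla C(q_t)$ is a harmless and accurate tie-back to Definition~\ref{def:lfdecomp}.
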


\begin{lemma}
    The Smooth Quadratic Prediction Market satisfies Axiom \ref{axiom:info} Information Incorporation. 
\end{lemma}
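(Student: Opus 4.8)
The plan is to unfold the definition of $\text{Pay}_L$ on both sides of the inequality in Axiom \ref{axiom:info}, cancel the common quadratic fee term, and reduce the claim to the monotonicity of the gradient of a convex differentiable function. Concretely, fix $q, r \in \reals^{d}$ and write $p = \nabla C(q)$, $p' = \nabla C(q+r)$. Then
\[
\text{Pay}_L(q+r, r) - \text{Pay}_L(q, r)
= \Bigl(\langle p', r\rangle + \tfrac{L}{2}\|r\|^{2}\Bigr) - \Bigl(\langle p, r\rangle + \tfrac{L}{2}\|r\|^{2}\Bigr)
= \langle \nabla C(q+r) - \nabla C(q),\, r\rangle,
\]
so it suffices to show this last inner product is nonnegative.

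The key step is to invoke Lemma \ref{lemma:monograd}: since $C$ is \textbf{CIIP}, it is in particular convex and differentiable on $\reals^{d}$, so the lemma gives $\langle \nabla C(x) - \nabla C(y),\, x - y\rangle \geq 0$ for all $x, y$. Applying this with $x = q + r$ and $y = q$ yields $x - y = r$ and hence $\langle \nabla C(q+r) - \nabla C(q),\, r\rangle \geq 0$, which is exactly what is needed.

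There is no real obstacle here; the only thing to be careful about is making explicit that the quadratic fee $\tfrac{L}{2}\|r\|^{2}$ depends only on the bundle $r$ and not on the state, so it cancels identically, leaving the purely ``linear-price'' contribution to be controlled by gradient monotonicity. I would state the short computation above as the body of the proof and cite Lemma \ref{lemma:monograd} for the final inequality.
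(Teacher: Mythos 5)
Your proof is correct and follows essentially the same route as the paper: both reduce the claim to $\langle \nabla C(q+r)-\nabla C(q), r\rangle \geq 0$ via Lemma \ref{lemma:monograd} and observe that the quadratic fee $\tfrac{L}{2}\|r\|^{2}$ is state-independent so it cancels (the paper phrases this as adding the fee to both sides rather than subtracting the two payments, which is the same computation).
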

\begin{proof}
     Since $C$ is convex and differentiable, by the monotonicity of gradients (Lemma \ref{lemma:monograd}), we have that 
        $$\langle  \nabla C(q_{t+1})-\nabla C(q_t) ,q_{t+1}-q_t \rangle  \geq 0   \Leftrightarrow \langle  \nabla C(q_{t}+r_t), r_t\rangle   \geq \langle \nabla C(q_t) , r_t \rangle $$
        where $q_{t+1}=q_t+r_t$.
        Then by adding $\frac{L}{2}\|r_t\|^{2}$ to both sides we get 
        $$\langle  \nabla C(q_{t}+r_t), r_t\rangle +\frac{L}{2}\|r_t\|^{2}  \geq \langle \nabla C(q_t) , r_t \rangle +\frac{L}{2}\|r_t\|^{2}  \Leftrightarrow \text{Pay}_L(q_t+r_t,r_t)  \geq \text{Pay}_L(q_t,r_t) ~.$$
\end{proof}

\begin{lemma}
    The Smooth Quadratic Prediction Market satisfies Axiom \ref{axiom:noarb} No Arbitrage. 
\end{lemma}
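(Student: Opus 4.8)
The plan is to reduce the axiom to a single scalar inequality and then dispatch it using only two facts: that $\nabla C$ maps into the simplex (part of \textbf{CIIP}, via the probability-mapping property), and that the Q-fee is nonnegative.

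First I would unpack the statement. Since $\rho(y_i)=\delta_i$, we have $\langle r_t,\rho(y_i)\rangle=(r_t)_i$, so $\min_{y\in\Y}\langle r_t,\rho(y)\rangle=\min_{i\in[d]}(r_t)_i$. Hence it suffices to pick any index $i^\star\in\argmin_{i}(r_t)_i$, set $y^\star=y_{i^\star}$, and show $\text{Pay}_L(q_t,r_t)\ge (r_t)_{i^\star}$.

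Next, the key step: because $C$ is \textbf{CIIP}, $p_t=\nabla C(q_t)\in\Delta_d$, i.e.\ $p_t$ is a probability vector. Therefore $\langle p_t,r_t\rangle=\sum_i (p_t)_i (r_t)_i$ is a convex combination of the coordinates of $r_t$, so $\langle p_t,r_t\rangle\ge \bigl(\sum_i (p_t)_i\bigr)\min_j (r_t)_j=\min_j (r_t)_j=(r_t)_{i^\star}$. Then, since $L\ge 0$ and $\|r_t\|^2\ge 0$, adding the Q-fee only increases the left-hand side: $\text{Pay}_L(q_t,r_t)=\langle p_t,r_t\rangle+\tfrac{L}{2}\|r_t\|^2\ge\langle p_t,r_t\rangle\ge (r_t)_{i^\star}=\langle r_t,\rho(y^\star)\rangle$, which is exactly Axiom~\ref{axiom:noarb}.

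I do not expect a genuine obstacle here; the only things to be careful about are invoking the correct consequence of \textbf{CIIP} (that $\nabla C$ ranges in $\Delta_d$, which is what licenses the convex-combination bound) and noting explicitly that $L\ge 0$ so the fee term never hurts. It is worth remarking that this is structurally simpler than the DCFMM no-arbitrage argument, which instead relies on $1$-invariance and monotonicity of $C$; for the Smooth Quadratic market the fee is manifestly nonnegative, so the linear price term alone already carries the inequality.
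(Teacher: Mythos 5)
Your proof is correct, but it takes a genuinely different route from the paper's. The paper sandwiches the payout between the two market payments: it invokes the DCFMM's no-arbitrage guarantee ($C(q_t+r_t)-C(q_t)\geq\langle r_t,\rho(y)\rangle$ for some $y$) and then uses $L$-smoothness to bound $D_C(q_{t+1},q_t)\leq\frac{L}{2}\|r_t\|^2$, concluding $\langle r_t,\rho(y)\rangle\leq \text{Pay}_D(q_t,r_t)\leq\text{Pay}_L(q_t,r_t)$. You instead argue directly: since $p_t=\nabla C(q_t)\in\Delta_d$, the linear term $\langle p_t,r_t\rangle$ is already a convex combination of the coordinates of $r_t$ and hence dominates $\min_i (r_t)_i=\min_{y}\langle r_t,\rho(y)\rangle$, and the nonnegative Q-fee only helps. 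Your argument is more elementary and strictly more general — it never uses smoothness, the Bregman decomposition, or the cited DCFMM result, and it shows that \emph{any} Price-Plus-Fee market with a \textbf{CIIP} cost function and a nonnegative fee satisfies no arbitrage. What the paper's route buys in exchange is the intermediate inequality $\text{Pay}_D\leq\text{Pay}_L$ itself, which is the workhorse reused immediately afterward for the worst-case-loss comparison between the two markets; your proof does not establish that relation. One small caveat: your side remark that the DCFMM argument ``relies on $1$-invariance and monotonicity'' is not really borne out by the paper (which simply cites Abernethy et al.\ Theorem 3.2), and in fact the DCFMM case admits essentially the same direct argument via the convexity gradient inequality $C(q+r)-C(q)\geq\langle\nabla C(q),r\rangle$; this does not affect the validity of your proof.
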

\begin{proof}
    By (Theorem 3.2, \cite{abernethy2013efficient}) the DCFMM satisfies no arbitrage, i.e, $C(q_t+r_t)-C(q_t)\geq \langle r_t,  \rho (y)  \rangle$ for some $y\in\Y $.
    However, it also holds that $$C(q_t+r_t)-C(q_t)= \langle p_t,r_t \rangle 
+ D_C(q_{t+1}, q_t)  \leq \langle p_t,r_t \rangle 
+ \frac{L}{2}\| r_t \|^2$$ via Definition \ref{def:bregsmoothstrong}. 
Therefore by combing the two inequalities we have that $ \langle r_t, \rho (y) \rangle \leq \langle p_t,r_t \rangle +\frac{L}{2}\| r_t\|^{2}$.
\end{proof}

\begin{lemma}
    The Smooth Quadratic Prediction Market satisfies Axiom \ref{axiom:express} Expressiveness. 
\end{lemma}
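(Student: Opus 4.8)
The plan is to reduce expressiveness to the probability-mapping clause of \textbf{CIIP}. First I would observe that the Smooth Quadratic Prediction Market is an instance of the Price-Plus-Fee Market of Definition \ref{def:lfdecomp}: the Q-Fee $\frac{L}{2}\|r_t\|^2$ satisfies $\frac{L}{2}\|r_t\|^2 = o(\|r_t\|)$ as $\|r_t\|\to 0$, so the market has $\text{InstPrice}(q_t) = \nabla C(q_t)$. Equivalently, differentiating $\text{Pay}_L(q_t,r_t) = \langle \nabla C(q_t), r_t\rangle + \frac{L}{2}\|r_t\|^2$ with respect to $r_t$ and evaluating at $r_t = 0$ returns $\nabla C(q_t)$, since the quadratic term is second order in $\|r_t\|$. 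Hence the instantaneous price function of this market coincides with that of the DCFMM built from the same $C$.

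Next, since $C$ is \textbf{CIIP} it is in particular a probability mapping, so $\nabla C : \reals^d \to \Delta_d$ and $\text{closure}(\{\nabla C(q)\mid q\in\reals^{d}\}) = \Delta_d$. Fix $p\in\Delta_d$ and $\epsilon>0$. Because $p$ lies in the closure of the set of achievable instantaneous prices, some element of that set is within distance $\epsilon$ of $p$; that is, there exists $q\in\reals^d$ with $\|\nabla C(q) - p\| < \epsilon$. Since $\text{InstPrice}(q) = \nabla C(q)$, this is exactly the conclusion of Axiom \ref{axiom:express}.

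I expect essentially no obstacle here beyond bookkeeping: all the content is already packaged in the probability-mapping property, and the only point needing care is that adding the quadratic fee does not perturb the instantaneous price, which is immediate from its being $o(\|r_t\|)$. If one wants to be fully careful with a general $p$-norm, one notes that $r\mapsto\|r\|^2$ is differentiable at $r=0$ with gradient $0$, so no subgradient subtlety arises and the identity $\text{InstPrice}(q) = \nabla C(q)$ holds verbatim.
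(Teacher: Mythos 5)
Your proof is correct and follows essentially the same route as the paper's: both reduce Expressiveness directly to the probability-mapping clause of \textbf{CIIP}, i.e.\ $\text{closure}(\{\nabla C(q)\mid q\in\reals^{d}\})=\Delta_d$, so every $p\in\Delta_d$ is approximated within $\epsilon$ by some $\nabla C(q)=\text{InstPrice}(q)$. Your version is in fact slightly more careful than the paper's, since you verify that the quadratic fee is $o(\|r_t\|)$ and hence does not perturb the instantaneous price, and you use the inclusion $\Delta_d\subseteq\text{closure}(\cdot)$ in the direction actually needed for the axiom.
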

\begin{proof}
By the \textbf{CIIP} assumption on $C$, it holds that $\text{closure}(\{\nabla C(q)\mid q\in\reals^{d}\} ) = \Delta_d$ which states every limit point of the set $\{\nabla C(q)\mid q\in\reals^{d}\}$ is in $\Delta_d$.
Hence, by definition of Expressiveness, the claim holds. 
\end{proof}

\begin{theorem}
    For any fixed trade history $h=(r_0,\dots ,r_t)$, the Smooth Quadratic Prediction Market has a better worst-case loss then the DCFMM.
\end{theorem}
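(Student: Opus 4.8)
The plan is to compare the market maker's total revenue against its worst-case liabilities for the two mechanisms on the \emph{same} trade history $h=(r_0,\dots,r_t)$, and show the Smooth Quadratic market never does worse. Recall the market maker's monetary loss after outcome $Y=y$ occurs is $\sum_{k=0}^{t}\langle r_k,\rho(y)\rangle - \sum_{k=0}^{t}\text{Pay}(q_k,r_k)$, i.e. total payout to traders minus total cash collected. Since the sequence of states $q_0,q_1,\dots,q_{t+1}$ is determined by $h$ alone (each $q_{k+1}=q_k+r_k$) and is identical under both mechanisms, the payout term $\sum_k\langle r_k,\rho(y)\rangle = \langle q_{t+1}-q_0,\rho(y)\rangle$ is the same for both. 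So the entire comparison reduces to the cash-collected term: I want to show $\sum_{k}\text{Pay}_L(q_k,r_k)\ge \sum_{k}\text{Pay}_D(q_k,r_k)$ for every $h$, which then gives a pointwise-smaller loss for every outcome $y$, hence a smaller worst-case loss.

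The key step is the per-trade inequality $\text{Pay}_L(q_k,r_k)\ge \text{Pay}_D(q_k,r_k)$. This is exactly the $L$-smoothness bound already invoked in the No Arbitrage proof: $\text{Pay}_D(q_k,r_k)=C(q_{k+1})-C(q_k)=\langle p_k,r_k\rangle + D_C(q_{k+1},q_k)$, and by Definition~\ref{def:bregsmoothstrong} (L-smoothness w.r.t.\ $\|\cdot\|$) we have $D_C(q_{k+1},q_k)\le \frac{L}{2}\|q_{k+1}-q_k\|^2=\frac{L}{2}\|r_k\|^2$, so $\text{Pay}_D(q_k,r_k)\le \langle p_k,r_k\rangle + \frac{L}{2}\|r_k\|^2 = \text{Pay}_L(q_k,r_k)$. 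Summing over $k=0,\dots,t$ gives the revenue comparison, and subtracting the common payout term gives, for every $y\in\Y$,
\[
\text{Loss}_L(h,y)\;=\;\text{Loss}_D(h,y)\;-\;\sum_{k=0}^{t}\Bigl(\tfrac{L}{2}\|r_k\|^2 - D_C(q_{k+1},q_k)\Bigr)\;\le\;\text{Loss}_D(h,y).
\]
Taking the max over $y$ (or sup over $\rho(\Y)$) on both sides preserves the inequality, so the worst-case loss of the Smooth Quadratic market on history $h$ is at most that of the DCFMM.

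I expect the only real subtlety — more a matter of careful statement than of difficulty — is pinning down exactly what ``worst-case loss for a fixed trade history'' means and making sure the comparison is genuinely pointwise in $y$ before taking the supremum; once the loss is written as payout-minus-revenue with payout identical across mechanisms, everything follows from the single Bregman-vs-quadratic inequality. A secondary remark worth including is that the inequality is strict whenever some trade has $D_C(q_{k+1},q_k) < \frac{L}{2}\|r_k\|^2$, which holds for any nonzero $r_k$ unless $C$ is exactly quadratic along that segment with the extremal curvature $L$ — so ``better'' is strict on essentially all nontrivial histories, matching the claim in the theorem statement. No deep machinery is needed beyond Definition~\ref{def:bregsmoothstrong} and the already-cited DCFMM no-arbitrage/loss results.
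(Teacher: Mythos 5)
Your proof is correct and follows essentially the same route as the paper's: the single key step in both is the per-trade inequality $\text{Pay}_D(q_k,r_k)=\langle p_k,r_k\rangle+D_C(q_{k+1},q_k)\le\langle p_k,r_k\rangle+\frac{L}{2}\|r_k\|^2=\text{Pay}_L(q_k,r_k)$ from $L$-smoothness, summed over the history. You simply make explicit the loss decomposition (identical payout term, pointwise comparison in $y$ before the supremum) that the paper leaves implicit.
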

\begin{proof}
    Observe for any $(q_i,r_i)$ for $i\in 0\dots ,t$ that 
    $$C(q_i+r_i)-C(q_i)=\langle p_i,r_i \rangle 
    + D_C(q_{i+1}, q_i)  \leq  \langle p_i,r_i \rangle 
    + \frac{L}{2}\| r_i \|^2$$ via Definition \ref{def:bregsmoothstrong}. 
    Hence, the collected revenue of the the Smooth Quadratic Prediction Market is greater overall than that of the DCFMM implying a better worst-case loss then the DCFMM.  
\end{proof}



\subsection{$\ell_2$-based Smooth Quadratic Prediction Markets}\label{subsec:el2}
As mentioned, the core purpose of a prediction market with AD securities is to elicit a distribution over a finite set of outcomes.
Hence, it is of essence to show that traders are incentivized to move the market state such that $\nabla C$ maps the market state to a trader's belief; however, recall that we stated the Smooth Quadratic Prediction Market does not satisfy the Axiom of Incentive Compatibility.
Interestingly, the expectation maximizing trading behavior of an agent w.r.t. the $\ell_2$-based Smooth Quadratic Prediction Market is expressible via gradient descent.
We show that the $\ell_2$-based Smooth Quadratic Prediction Market satisfies a form of \textit{incremental} incentive compatibility.
Overall, the core result of this section, Theorem \ref{thm:incentivel2}, states that a trader is incentivized to move the market state such that the market state maps to their belief via a sequence of bundle purchases instead of via a single transaction. 
We formally define incremental incentive compatibility as follows. 

\begin{axiom}\label{ax:incrementalincentive}(Incremental Incentive Compatibility): Assume the market is at state $q_0$ and that a sequence of agents with the same belief $\mu\in\Delta_{d}$ purchases bundles $r_t$ relative to maximizing their expected payout
\begin{equation*}\label{eq:expecteddcfmmincre}
    \begin{aligned}
\argmax_{r_t\in\reals^{d}} \quad & \underset{\text{Expected Payout}}{\underbrace{\langle \mu ,r_t \rangle}} - 
    \underset{\text{Payment to Market}}{\underbrace{\text{Pay}(q_t,r_t)}}.
    \end{aligned}
    \end{equation*}
Then $\lim_{t\to\infty}\nabla C(q_t)= \mu$.
\end{axiom}


We now formally define gradient descent (GD) and provide a supporting Theorem and Lemma for GD which will be used in proving incremental incentive compatibility for the $\ell_2$-based Smooth Quadratic Prediction Market. 

\begin{definition}[Gradient Descent]
    Let $x_0\in\reals^d$, and let $\gamma >0$ be a step size. 
    Given a differentiable function $f$, the gradient descent (GD) algorithm defines a sequences $(x_t)_{t\in\N_0}$ satisfying $x_{t+1}=x_t-\gamma \nabla f(x_t).$
\end{definition}

\begin{theorem}[\cite{garrigos2023handbook}, Theorem 3.4]\label{thm:gd2}
Assume that $f$ is convex and $\ell_2$-based $L$-smooth, for some $L > 0$.
Let $(x_t)_{t\in\N_0}$ be the sequence of iterates generated by the GD algorithm, with a stepsize satisfying $0<\gamma\leq\frac{1}{L}$.
Then, for all $x^*\in\argmin f$ and for all $t\in\N_0$ we have that $$ f(x^{(t)})-\inf f\leq \frac{\|x^{(0)}-x^*\|^2_{2}}{2\gamma t} ~.$$
\end{theorem}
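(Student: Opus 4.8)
The plan is to run the standard convergence analysis of gradient descent for convex, $\ell_2$-based $L$-smooth objectives, specialized to stepsizes $\gamma \le 1/L$. Three ingredients are combined: a sufficient-decrease inequality coming from $L$-smoothness; convexity, to lower-bound the suboptimality gap by a gradient inner product; and a ``Fej\'er''-style algebraic identity that rewrites one GD step as an exact difference of squared distances to an optimum $x^*$. Together these yield a telescoping sum over $t$, and monotonicity of $(f(x_t))_t$ upgrades the resulting averaged bound to the claimed pointwise $O(1/t)$ rate.

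First I would derive the descent lemma. Since $f$ is $\ell_2$-based $L$-smooth, Definition~\ref{def:bregsmoothstrong} rearranges to $f(x) \le f(y) + \langle \nabla f(y), x - y\rangle + \tfrac{L}{2}\|x-y\|_2^2$ for all $x,y\in\reals^d$. Plugging in $y = x_t$ and $x = x_{t+1} = x_t - \gamma \nabla f(x_t)$ gives $f(x_{t+1}) \le f(x_t) - \gamma\big(1 - \tfrac{L\gamma}{2}\big)\|\nabla f(x_t)\|_2^2$; the hypothesis $\gamma \le 1/L$ makes the coefficient at least $\gamma/2$, so $f(x_{t+1}) \le f(x_t) - \tfrac{\gamma}{2}\|\nabla f(x_t)\|_2^2$. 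In particular $(f(x_t))_t$ is non-increasing --- this is exactly where $\gamma \le 1/L$ enters.

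Next I would combine this with convexity, $f(x_t) - f(x^*) \le \langle \nabla f(x_t), x_t - x^*\rangle$, obtaining $f(x_{t+1}) - f(x^*) \le \langle \nabla f(x_t), x_t - x^*\rangle - \tfrac{\gamma}{2}\|\nabla f(x_t)\|_2^2$, and then rewrite the right-hand side via the identity $\langle \nabla f(x_t), x_t - x^*\rangle - \tfrac{\gamma}{2}\|\nabla f(x_t)\|_2^2 = \tfrac{1}{2\gamma}\big(\|x_t - x^*\|_2^2 - \|x_{t+1} - x^*\|_2^2\big)$, which is just completion of the square after expanding $\|x_{t+1} - x^*\|_2^2 = \|x_t - x^* - \gamma\nabla f(x_t)\|_2^2$. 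Summing the resulting bound $f(x_{t+1}) - f(x^*) \le \tfrac{1}{2\gamma}\big(\|x_t - x^*\|_2^2 - \|x_{t+1} - x^*\|_2^2\big)$ over $t = 0, \dots, T-1$ telescopes the right side to at most $\tfrac{1}{2\gamma}\|x_0 - x^*\|_2^2$, so $\sum_{t=1}^{T}(f(x_t) - f(x^*)) \le \tfrac{\|x_0 - x^*\|_2^2}{2\gamma}$. Finally, monotonicity from the descent lemma gives $f(x_T) - f(x^*) \le f(x_t) - f(x^*)$ for every $t \le T$, hence $T\,(f(x_T) - f(x^*)) \le \tfrac{\|x_0-x^*\|_2^2}{2\gamma}$; dividing by $T$ and using $f(x^*) = \inf f$ yields the claim (the $t = 0$ case being vacuous).

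There is no genuine obstacle here; the result is classical. The only points requiring minor care are: invoking $\gamma \le 1/L$ exactly once, to guarantee the $\gamma/2$ descent coefficient and hence monotonicity of $(f(x_t))_t$; making sure the argument never uses any property of $x^*$ beyond $f(x^*) = \inf f$, so that the bound holds for \emph{every} $x^* \in \argmin f$ as stated; and the completion-of-the-square identity in the third step, which is precisely what makes the sum telescope.
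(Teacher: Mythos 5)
Your proof is correct and is essentially the standard argument used in the cited source (\citealp{garrigos2023handbook}, Theorem 3.4); the paper itself imports this theorem by citation without reproving it. The three ingredients you combine --- the descent lemma with the $\gamma \le 1/L$ threshold, the convexity lower bound, and the completion-of-the-square telescoping identity followed by monotonicity of $(f(x_t))_t$ --- match the reference's proof, so there is nothing to reconcile.
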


\begin{lemma}[\cite{sidford2024}, Lemma 6.1.6]\label{sid616}
    If $f:\reals^d\to \reals$ is L-smooth w.r.t. $\|\cdot \|$ then for all $x^*\in \argmin f$ and $x\in\reals^{d}$ it holds that 
    $$ \frac{1}{2L}\|\nabla f(x)\|^{2}_{*}\leq f(x)-f(x^*)\leq \frac{L}{2}\|x-x^*\|^{2}~.$$
\end{lemma}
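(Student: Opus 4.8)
The plan is to prove the two inequalities separately, each directly from the definition of $L$-smoothness, i.e. the quadratic upper bound $D_f(x,y) \le \tfrac{L}{2}\|x-y\|^2$ on the Bregman divergence.

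For the right-hand inequality $f(x)-f(x^*) \le \tfrac{L}{2}\|x-x^*\|^2$, I would first observe that since $f$ is convex and differentiable (part of the $L$-smoothness hypothesis) and $x^* \in \argmin f$, first-order optimality gives $\nabla f(x^*) = 0$. Hence $D_f(x,x^*) = f(x) - f(x^*) - \langle \nabla f(x^*), x - x^*\rangle = f(x) - f(x^*)$, and the smoothness bound $D_f(x,x^*) \le \tfrac{L}{2}\|x-x^*\|^2$ yields the claim immediately.

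For the left-hand inequality $\tfrac{1}{2L}\|\nabla f(x)\|_*^2 \le f(x) - f(x^*)$, I would rewrite the smoothness bound $D_f(y,x) \le \tfrac{L}{2}\|y-x\|^2$ as $f(y) \le f(x) + \langle \nabla f(x), y - x\rangle + \tfrac{L}{2}\|y-x\|^2$ for every $y \in \reals^d$, and then minimize the right-hand side over $y$. Parametrizing $y = x - t u$ with $\|u\| = 1$ and $t \ge 0$, the bound becomes $f(x) - t\langle \nabla f(x), u\rangle + \tfrac{L}{2} t^2$. I would then pick $u$ attaining the dual norm, i.e. $\langle \nabla f(x), u\rangle = \|\nabla f(x)\|_*$ with $\|u\| = 1$ (the supremum in the definition of $\|\cdot\|_*$ is attained because the unit ball of $\reals^d$ is compact), and optimize the scalar $t$, which is minimized at $t = \|\nabla f(x)\|_*/L$ with value $f(x) - \tfrac{1}{2L}\|\nabla f(x)\|_*^2$. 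Since $f(x^*) = \inf f$ is at most this minimized right-hand side, rearranging gives the inequality.

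The main obstacle is the general-norm minimization in the second part: for the $2$-norm this step is just completing the square, but for an arbitrary $p$-norm it must be routed through the dual-norm characterization and through attainment of the supremum defining $\|\cdot\|_*$, which is exactly why the finite-dimensional setting matters. Once $\nabla f(x^*) = 0$ and that optimization are in hand, the rest is routine algebra.
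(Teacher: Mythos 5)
Your proof is correct. Note that the paper itself gives no proof of this statement: it is imported verbatim as Lemma 6.1.6 of the cited lecture notes, so there is nothing internal to compare against. Your two-part argument is the standard one and each step checks out. For the upper bound, $\nabla f(x^*)=0$ holds because $x^*$ is an unconstrained minimizer of a differentiable function on all of $\reals^d$ (convexity is not even needed here), so $D_f(x,x^*)=f(x)-f(x^*)$ and smoothness finishes it. For the lower bound, your minimization of the quadratic majorant $y\mapsto f(x)+\langle\nabla f(x),y-x\rangle+\tfrac{L}{2}\|y-x\|^2$ via the parametrization $y=x-tu$ with $\|u\|=1$ is exactly right, and your care about attainment of the supremum defining $\|\cdot\|_*$ is warranted and resolved correctly by compactness of the unit ball in finite dimensions (with the degenerate case $\nabla f(x)=0$ being trivial). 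It is worth observing that this second half is precisely the content of the paper's Lemma~\ref{sid615} (the descent lemma giving $f(y^*)\leq f(x)-\tfrac{1}{2L}\|\nabla f(x)\|_*^2$), which the paper also cites without proof; your argument therefore supplies a proof of that auxiliary lemma as a byproduct, and combining it with $f(x^*)\leq f(y^*)$ gives the left-hand inequality exactly as you describe.
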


\begin{theorem}\label{thm:incentivel2}
    With respect to some \textbf{CIIP} $C$, define a $\ell_2$-based Smooth Quadratic Prediction Market.
    The market satisfies Axiom \ref{ax:incrementalincentive} Incremental Incentive Compatibility furthermore $\lim_{t\to\infty}\nabla C(q_t)= \mu$ at a rate of $\frac{1}{t}$. 
\end{theorem}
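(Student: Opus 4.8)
The plan is to identify the trading dynamics explicitly, recognize them as gradient descent on a shifted copy of $C$, and then invoke the two imported convergence results (Theorem~\ref{thm:gd2} and Lemma~\ref{sid616}).

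\emph{Step 1: the greedy trade is a gradient step.} At state $q_t$ a trader with belief $\mu$ solves $\argmax_{r_t\in\reals^d}\ \langle \mu, r_t\rangle - \langle \nabla C(q_t), r_t\rangle - \frac{L}{2}\|r_t\|_2^2$. The objective is strictly concave in $r_t$, so its first-order condition is necessary and sufficient: $\mu - \nabla C(q_t) - L r_t = 0$, giving the unique optimal bundle $r_t = \frac{1}{L}\big(\mu - \nabla C(q_t)\big)$ and hence the state update
$$q_{t+1} = q_t - \tfrac{1}{L}\big(\nabla C(q_t) - \mu\big).$$
Now define $f(q) = C(q) - \langle \mu, q\rangle$. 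The linear term cancels out of the Bregman divergence, so $D_f(x,y) = D_C(x,y)$ for all $x,y$; hence $f$ is convex (since $D_f = D_C \geq 0$) and $\ell_2$-based $L$-smooth (inherited from $C$ via Definition~\ref{def:bregsmoothstrong}). Since $\nabla f(q) = \nabla C(q) - \mu$, the update above is exactly GD on $f$ with step size $\gamma = \frac{1}{L}$, which satisfies $0 < \gamma \le \frac{1}{L}$. Moreover $\nabla f(q) = 0 \iff \nabla C(q) = \mu$, so any $q^* \in \argmin f$ satisfies $\nabla C(q^*) = \mu$; such a $q^*$ exists whenever $\mu$ lies in the range of $\nabla C$ (e.g.\ $\mu \in \relint(\Delta_d)$ for softmax, any $\mu \in \Delta_d$ for sparsemax), which I assume here.

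\emph{Step 2: the rate.} Apply Theorem~\ref{thm:gd2} to $f$ with $\gamma = \frac{1}{L}$: for any $q^* \in \argmin f$ and all $t$,
$$f(q_t) - \inf f \ \le\ \frac{\|q_0 - q^*\|_2^2}{2\gamma t} \ =\ \frac{L\,\|q_0 - q^*\|_2^2}{2t}.$$
Then apply Lemma~\ref{sid616} to $f$ (with $\|\cdot\| = \|\cdot\|_* = \|\cdot\|_2$): $\frac{1}{2L}\|\nabla f(q_t)\|_2^2 \le f(q_t) - \inf f$. Combining the two,
$$\|\nabla C(q_t) - \mu\|_2^2 \ =\ \|\nabla f(q_t)\|_2^2 \ \le\ 2L\big(f(q_t) - \inf f\big) \ \le\ \frac{L^2\,\|q_0 - q^*\|_2^2}{t},$$
so $\nabla C(q_t) \to \mu$, establishing Axiom~\ref{ax:incrementalincentive}, with the squared price error (equivalently, the value gap $f(q_t)-\inf f$) vanishing at rate $O(1/t)$.

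\emph{Main obstacle.} The substantive content is the reduction in Step~1: recognizing that the myopic, single-shot expected-profit maximization each trader performs independently composes into gradient descent on the potential $C - \langle \mu, \cdot\rangle$, with precisely the step size $1/L$ licensed by $L$-smoothness, after which convergence is routine via the cited lemmas. The only genuine technical care needed is the existence of a minimizer $q^*$ of $f$ — equivalently, of a market state mapping exactly to $\mu$. When $\mu$ lies on the boundary of $\Delta_d$ and is not attained by $\nabla C$, one must either replace $f(q^*)$ by $\inf f$ and argue $\nabla C(q_t) \to \mu$ directly, or pass to a limit over interior beliefs using the closure property of $\{\nabla C(q) : q \in \reals^d\}$ guaranteed by \textbf{CIIP}.
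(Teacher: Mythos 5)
Your proposal is correct and follows essentially the same route as the paper: define the surrogate $\overline{C}(q)=C(q)-\langle\mu,q\rangle$, recognize the myopic trade as a gradient descent step on $\overline{C}$ with step size $1/L$, and invoke Theorem~\ref{thm:gd2} together with Lemma~\ref{sid616} to obtain the $1/t$ rate. You actually supply more detail than the paper does, in particular the explicit chaining of the two cited results into a bound on $\|\nabla C(q_t)-\mu\|_2^2$ and the caveat about existence of a minimizer $q^*$ when $\mu$ is not attained by $\nabla C$, a point the paper's proof leaves implicit.
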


\begin{proof}
Let $\overline{C}(q)=C(q)-\langle \mu,q\rangle$ and note that $\nabla \overline{C}(q)= \nabla C(q)-\mu $.
Note that the utility of an agent is equivalent to the following
\begin{equation*}
    \begin{aligned}
    \min_{q_{t+1}\in\reals^{d}} \quad & \underset{\text{Payment to Market}}{\underbrace{\Big( \langle \nabla C(q_t),q_{t+1}-q_t \rangle +\frac{L}{2}\| q_{t+1}-q_t  \|^{2}_2  \Big)}}-\underset{\text{Expected Payout}}{\underbrace{\langle \mu ,q_{t+1}-q_t \rangle}}  
    .
    \end{aligned}
    \end{equation*}
Hence, the update of the market state $q_{t+1}\leftarrow q_t-(\frac{1}{L})( \nabla C(q_t)-\mu)= q_t-(\frac{1}{L})\nabla\overline{C}(q_t)$ is a GD step performed on $\overline{C}$.
Observe, setting $\nabla\overline{C}(q)=0$ we get that $\nabla C(q)=\mu$, i.e. for $q^*\in \argmin \overline{C}(q)$ it holds that $\nabla C(q^*)=\mu$.  
Hence,  $q_{t+1}\leftarrow q_t-(\frac{1}{L})( \nabla C(q_t)-\mu)= q_t-(\frac{1}{L})\nabla\overline{C}(q_t)$ is a GD update for $\overline{C}(q)=C(q)-\langle \mu,q\rangle$ with learning rate $\frac{1}{L}$ whose minimum point is a $q^*$ such that $\nabla C(q^*)=\mu$.
Also note that if $\nabla C(q_t)=\mu$ then $q_{t+1}\leftarrow q_t-(\frac{1}{L})( \nabla C(q_t)-\mu)= q_t-(\frac{1}{L})\nabla\overline{C}(q_t)$ hence, any minimizer of $\overline{C}$ is a stationary point. 
Observe that $\overline{C}$ is $\ell_2$-based smooth and hence Theorem \ref{thm:gd2} and Lemma \ref{sid616} apply.
\end{proof}



\subsection{$\ell_{p}$-based Smooth Quadratic Prediction Markets}\label{subsec:elp}

We now generalize our results for the Smooth Quadratic Prediction Market defined w.r.t. $\|\cdot \|_p$ for $p\in [1,\infty ]$.
By altering the norm, the Smooth Quadratic Prediction Market gets varying behavior pertaining to the charge for Q-fee and path of convergence of $\{q_t\}_{t\in\mathbb{N}_0}\to \argmin_{q\in\reals^{d}}\overline{C}(q)$.
In this generalized setting, agents are incentivized to trade following the general steepest descent algorithm in order to maximize their expected return.
We now formally define general steepest descent (SD) and provide a supporting Theorem for SD which will be used in proving incremental incentive compatibility for the $\ell_p$-based Smooth Quadratic Prediction Market. 


\begin{definition}[General Steepest
Descent]
 Given a function $f$ that is differentiable and L-smooth w.r.t. $\|\cdot \|$, the general steepest descent (SD) method w.r.t. the norm $\|\cdot \|$ iteratively defines a sequences $(x_t)_{t\in\N_0}$ via $x_{t+1}\in \argmin_{x\in\reals^{d}} \langle \nabla f(x_t),x-x_t\rangle+\frac{L}{2}\|x-x_t\|^2$.

\end{definition}

\begin{theorem}[\cite{kelner2014almost}, Theorem 1]\label{thm:gensteept}
If $f$ is convex and L-smooth w.r.t. $\|\cdot \|$, then SD w.r.t. $\|\cdot \|$ satisfies $f(x_t)- \inf f \leq \frac{2LK^2}{t+4} \text{ with } K:=\max_{x:f(x)\leq f(x_0)}\min_{x^*:f(x^*)=\inf f}\|x-x^*\|$
where $(x_t)_{t\in\N_0}$ is a sequence of iterates generated by the SD algorithm.
\end{theorem}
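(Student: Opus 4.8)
The plan is to run the classical three-ingredient convergence argument for steepest descent, using the general-norm smoothness of Definition~\ref{def:bregsmoothstrong}: a one-step descent inequality, a convexity lower bound on the dual gradient norm, and a self-bounding recursion for the optimality gap. First I would record the \emph{descent lemma}: $L$-smoothness w.r.t. $\|\cdot\|$ means $D_f(x_{t+1},x_t)\le\frac{L}{2}\|x_{t+1}-x_t\|^2$, i.e. $f(x_{t+1})\le f(x_t)+\langle\nabla f(x_t),x_{t+1}-x_t\rangle+\frac{L}{2}\|x_{t+1}-x_t\|^2$. Since $x_{t+1}$ is by definition a minimizer of exactly this right-hand side over $x\in\reals^d$, and the feasible point $x=x_t$ attains value $f(x_t)$, we get $f(x_{t+1})\le f(x_t)$; hence every iterate stays in the sublevel set $\{x:f(x)\le f(x_0)\}$, so $\min_{x^*\in\argmin f}\|x_t-x^*\|\le K$ for all $t$. (If $K=\infty$ the claimed bound is vacuous, so assume $\argmin f\neq\emptyset$ and $K<\infty$.)

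Next I would quantify per-step progress. The key general-norm computation is that for any $g\in\reals^d$, $\min_{v\in\reals^d}\big(\langle g,v\rangle+\frac{L}{2}\|v\|^2\big)=-\frac{1}{2L}\|g\|_*^2$: writing $v=su$ with $\|u\|=1$, minimizing over the direction $u$ contributes $-\|g\|_*$ by definition of the dual norm, and then minimizing $-s\|g\|_*+\frac{L}{2}s^2$ over $s\ge 0$ gives $-\frac{1}{2L}\|g\|_*^2$. Substituting $g=\nabla f(x_t)$ into the descent lemma yields $f(x_{t+1})-f(x_t)\le-\frac{1}{2L}\|\nabla f(x_t)\|_*^2$. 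Conversely, convexity of $f$ together with the duality bound $\langle w,h\rangle\le\|w\|_*\|h\|$ gives, for the nearest minimizer $x^*$ to $x_t$, $f(x_t)-\inf f\le\langle\nabla f(x_t),x_t-x^*\rangle\le\|\nabla f(x_t)\|_*\,\|x_t-x^*\|\le K\|\nabla f(x_t)\|_*$, hence $\|\nabla f(x_t)\|_*^2\ge(f(x_t)-\inf f)^2/K^2$.

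Writing $a_t:=f(x_t)-\inf f\ge 0$ (the case $a_t=0$ being trivial, since the iteration has then reached the optimum), the two inequalities combine to $a_{t+1}\le a_t-\frac{1}{2LK^2}a_t^2$. Since $(a_t)$ is nonincreasing, dividing the gap by $a_ta_{t+1}$ telescopes: $\frac{1}{a_{t+1}}-\frac{1}{a_t}\ge\frac{a_t-a_{t+1}}{a_t^2}\ge\frac{1}{2LK^2}$, so $\frac{1}{a_t}\ge\frac{1}{a_0}+\frac{t}{2LK^2}$. It remains to bound the initialization: applying smoothness at $x_0$ and a minimizer (where the gradient vanishes) gives $a_0=f(x_0)-f(x^*)\le\frac{L}{2}\|x_0-x^*\|^2\le\frac{L}{2}K^2$, so $\frac{1}{a_0}\ge\frac{2}{LK^2}=\frac{4}{2LK^2}$ and therefore $\frac{1}{a_t}\ge\frac{t+4}{2LK^2}$, i.e. $a_t\le\frac{2LK^2}{t+4}$.

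The only genuinely delicate step is the general-norm minimization $\min_v(\langle g,v\rangle+\frac{L}{2}\|v\|^2)=-\frac{1}{2L}\|g\|_*^2$ and the matching dual-norm inequality; everything else is the standard $O(1/t)$ recursion. Two minor points to verify along the way: that a minimizer $x_{t+1}$ in the definition of SD exists (it does --- the objective is continuous and coercive in $\|x-x_t\|$), and the degenerate situation where some $a_t=0$, in which case all subsequent iterates coincide with a minimizer and the bound holds trivially.
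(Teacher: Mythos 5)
This theorem is imported from \cite{kelner2014almost} and the paper gives no proof of it, so there is nothing in-paper to compare against; your blind derivation is a correct, self-contained proof of the stated bound. It is the standard argument for this result --- the one-step decrease $f(x_{t+1})\le f(x_t)-\tfrac{1}{2L}\|\nabla f(x_t)\|_*^2$ (which is exactly Lemma~\ref{sid615} quoted in the paper's appendix), the convexity/dual-norm bound $f(x_t)-\inf f\le K\|\nabla f(x_t)\|_*$ on the $f(x_0)$-sublevel set, the telescoped recursion for $1/a_t$, and the initialization $a_0\le LK^2/2$ supplying the $+4$ --- and the delicate points you flag (the general-norm identity for $\min_v\langle g,v\rangle+\tfrac{L}{2}\|v\|^2$, monotonicity of the iterates, and the degenerate cases $K=\infty$ or $a_t=0$) are all handled correctly.
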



\begin{figure}[t]
	\centering
	\begin{subfigure}{0.32\linewidth}
		\includegraphics[width=\linewidth]{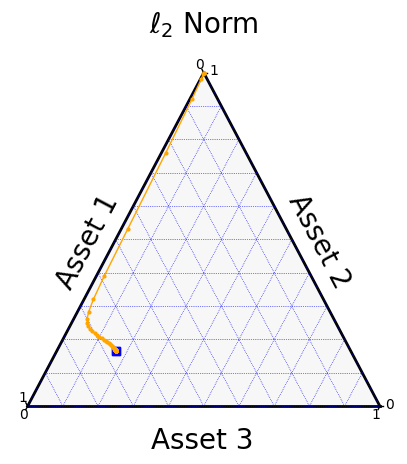}
		\label{fig:subfigA}
	\end{subfigure}
	\begin{subfigure}{0.32\linewidth}
		\includegraphics[width=\linewidth]{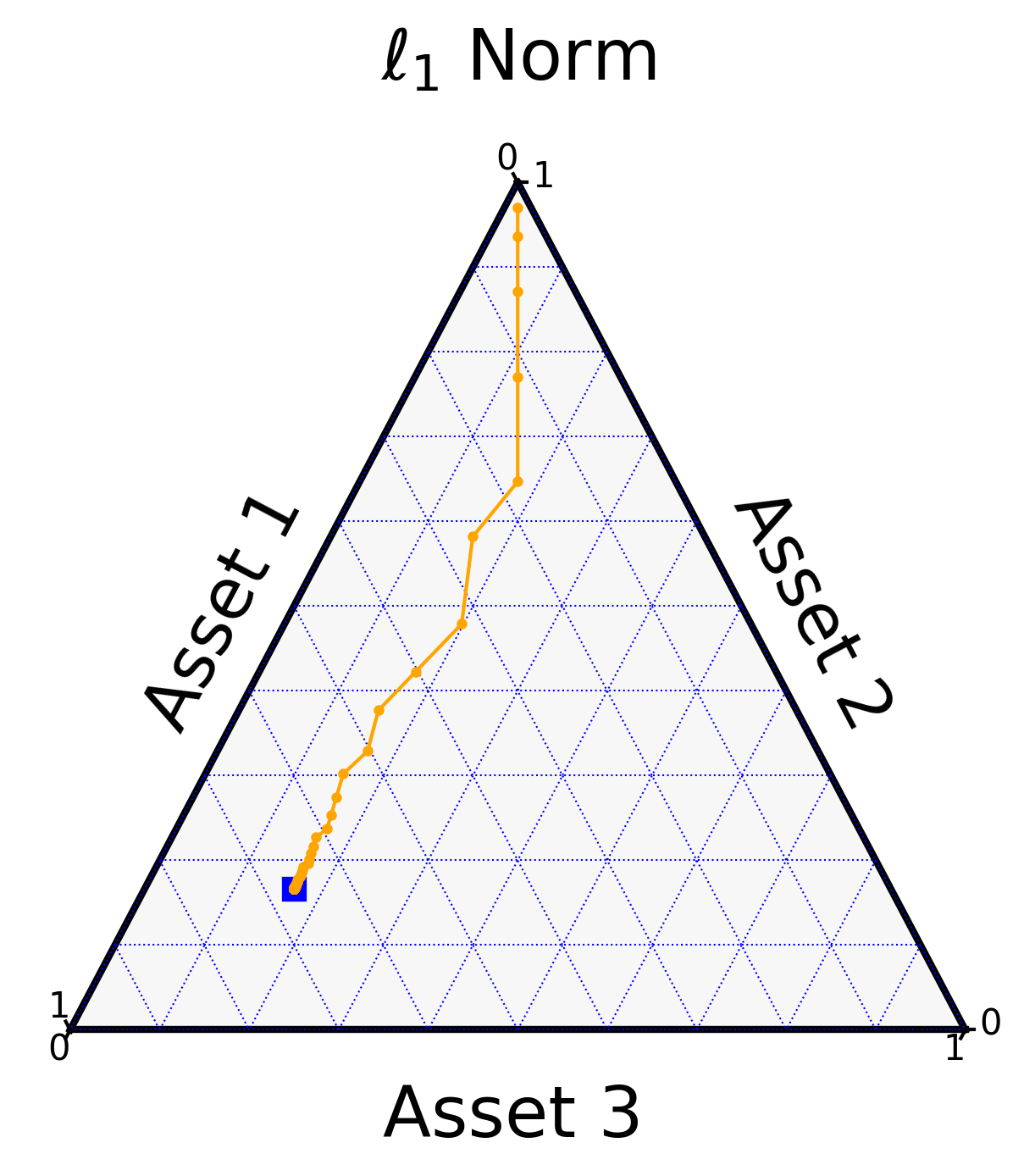}
		\label{fig:subfigB}
	\end{subfigure}
	\begin{subfigure}{0.32\linewidth}
	        \includegraphics[width=\linewidth]{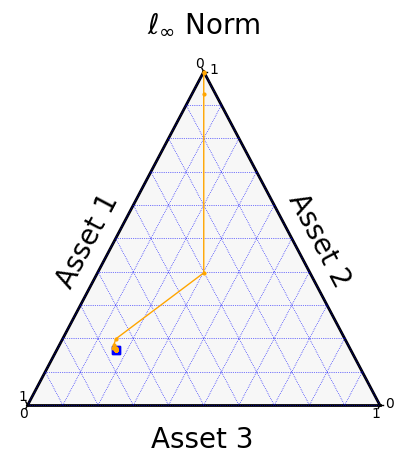}
	        \label{fig:subfigC}
         \end{subfigure}
	\caption{Let $q_{0}=(10,20,10)$, $C$ is softmax with smoothness of $L=1$, and $\mu = (1/6,1/6,2/3)$. The blue square expresses $\mu$ and the orange path towards the blue square demonstrates the updating market distribution states. As denote by the titles's of each plot, we vary the norm used for the Smooth Quadratic Prediction Market. Note although softmax is not $\ell_{1}$-smooth, we use said norm experimentally for the sake of comparison.}\label{fig:subfigures}
\end{figure}

\begin{theorem}\label{thm:genconv}
    With respect to some \textbf{CIIP} $C$, define an $\ell_p$-based Smooth Quadratic Prediction Market.
    The market satisfies Axiom \ref{ax:incrementalincentive} Incremental Incentive Compatibility furthermore $\lim_{t\to\infty}\nabla C(q_t)= \mu$ at a rate of $\frac{1}{t}$. 
\end{theorem}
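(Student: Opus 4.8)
The plan is to mirror the proof of Theorem~\ref{thm:incentivel2} almost verbatim, replacing the appeal to gradient descent with an appeal to general steepest descent. First I would define $\overline{C}(q) = C(q) - \langle \mu, q\rangle$ and note that $\nabla\overline{C}(q) = \nabla C(q) - \mu$, so that $q^\ast$ minimizes $\overline{C}$ iff $\nabla C(q^\ast) = \mu$. Then I would rewrite the agent's expected-return maximization at round $t$ as the minimization
\begin{equation*}
\min_{q_{t+1}\in\reals^{d}}\ \langle \nabla C(q_t), q_{t+1}-q_t\rangle + \frac{L}{2}\|q_{t+1}-q_t\|_p^{2} - \langle \mu, q_{t+1}-q_t\rangle = \min_{q_{t+1}\in\reals^{d}}\ \langle \nabla\overline{C}(q_t), q_{t+1}-q_t\rangle + \frac{L}{2}\|q_{t+1}-q_t\|_p^{2},
\end{equation*}
which is exactly one step of general steepest descent (SD) applied to $\overline{C}$ with smoothness constant $L$ w.r.t.\ $\|\cdot\|_p$. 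So the sequence of market states $\{q_t\}$ generated by self-interested traders with common belief $\mu$ coincides with the SD iterates on $\overline{C}$.

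Next I would verify the hypotheses needed to invoke Theorem~\ref{thm:gensteept}. The function $\overline{C}$ is convex (it is $C$, which is convex since it is \textbf{CIIP}, minus a linear term) and it is $L$-smooth w.r.t.\ $\|\cdot\|_p$ (adding a linear term does not change the Bregman divergence, so $D_{\overline{C}} = D_C \le \frac{L}{2}\|\cdot\|_p^2$). One subtlety is that $\overline{C}$ need not attain its infimum on all of $\reals^d$; however, by the \textbf{CIIP} assumption $\text{closure}(\{\nabla C(q) : q\in\reals^d\}) = \Delta_d$, and since $\mu\in\Delta_d$ (I would note the harmless strengthening to $\mu\in\relint(\Delta_d)$, or argue via limits that it suffices) there is a $q^\ast$ with $\nabla C(q^\ast) = \mu$, i.e.\ $\argmin\overline{C}\neq\emptyset$. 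Then Theorem~\ref{thm:gensteept} gives
\begin{equation*}
\overline{C}(q_t) - \inf\overline{C} \le \frac{2LK^2}{t+4},
\end{equation*}
so $\overline{C}(q_t)\to\inf\overline{C}$ at rate $O(1/t)$.

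To conclude that $\nabla C(q_t)\to\mu$, I would pass from function-value convergence to gradient convergence using the left inequality of Lemma~\ref{sid616}: since $\overline{C}$ is $L$-smooth w.r.t.\ $\|\cdot\|_p$, for any $q^\ast\in\argmin\overline{C}$ we have $\frac{1}{2L}\|\nabla\overline{C}(q_t)\|_{p,*}^2 \le \overline{C}(q_t) - \overline{C}(q^\ast) \le \frac{2LK^2}{t+4}$, hence $\|\nabla C(q_t) - \mu\|_{p,*} = \|\nabla\overline{C}(q_t)\|_{p,*} \to 0$ at rate $O(1/\sqrt{t})$; and because $\nabla C(q_t), \mu\in\Delta_d$ lie in a finite-dimensional bounded set, all norms are equivalent there, so $\nabla C(q_t)\to\mu$ in the displayed norm as well, giving Axiom~\ref{ax:incrementalincentive}. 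The stated $1/t$ rate would then follow by reading off the function-value rate (or by quoting whichever convergence statement the authors intend as the ``rate of $\frac1t$'', which I suspect refers to $\overline{C}(q_t)-\inf\overline{C}$).

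The main obstacle I anticipate is the existence/uniqueness of the SD update and of the minimizer $q^\ast$ when $p\neq 2$: for $p=1$ or $p=\infty$ the map $q_{t+1}\mapsto \langle\nabla\overline{C}(q_t),q_{t+1}-q_t\rangle + \frac{L}{2}\|q_{t+1}-q_t\|_p^2$ need not have a unique minimizer (the SD step is set-valued, as the definition already anticipates with ``$\in$''), so I would need to argue that any measurable selection of SD iterates still satisfies the Theorem~\ref{thm:gensteept} bound — which it does, since that theorem is stated for the set-valued iteration — and that $\argmin\overline{C}$ is nonempty, which is where the \textbf{CIIP} surjectivity-up-to-closure property does the real work. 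A secondary point worth a sentence is handling $\mu$ on the boundary of $\Delta_d$, where a minimizer may fail to exist in $\reals^d$ and one must take limits; restricting to $\mu\in\relint(\Delta_d)$, or noting that Theorem~\ref{thm:gensteept}'s bound degrades gracefully as $q^\ast$ recedes, sidesteps this.
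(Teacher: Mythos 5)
Your proposal follows essentially the same route as the paper: rewrite the trader's objective as a general steepest descent step on $\overline{C}(q)=C(q)-\langle\mu,q\rangle$, invoke Theorem~\ref{thm:gensteept} for the $O(1/t)$ function-value rate, and pass to gradient convergence via Lemma~\ref{sid616}. You additionally spell out details the paper leaves implicit (nonemptiness of $\argmin\overline{C}$, the set-valued SD step for $p\in\{1,\infty\}$, and boundary $\mu$), which only strengthens the argument.
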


\begin{proof}
Observe that the utility of an agent is equivalent to the following \begin{equation}\label{eq:surrpnorm}
    \begin{aligned}
    \argmin_{q_{t+1}\in\reals^{d}} \quad & 
    \underset{\text{Payment to Market}}{\underbrace{\Big( \langle \nabla C(q_t),q_{t+1}-q_t \rangle +\frac{L}{2}\| q_{t+1}-q_t  \|^{2}_p  \Big)}}-\underset{\text{Expected Payout}}{\underbrace{\langle \mu ,q_{t+1}-q_t \rangle}} \\
    \Leftrightarrow \argmin_{q_{t+1}\in\reals^{d}} \quad & 
     \langle \nabla \overline{C}(q_t),q_{t+1}-q_t \rangle +\frac{L}{2}\| q_{t+1}-q_t  \|^{2}_p 
    \end{aligned}
    \end{equation}
    as $q_{t+1}=q_t+r_t$.
    Hence, Eq. (\ref{eq:surrpnorm}) is a SD step for $\overline{C}=C(q)-\langle \mu,q\rangle$.
    Observe that $\overline{C}$ is $\ell_p$-based smooth and hence Theorem \ref{thm:gensteept} and Lemma \ref{sid616} apply.
\end{proof}

In Figure \ref{fig:subfigures}, we demonstrate the varying convergence behavior of the Smooth Quadratic Prediction Market when using different norms.
There are numerous convergence results for gradient (general steepest) descent.
To avoid redundancy with the literature, we refer the reader to Appendix \ref{app:steep} for further convergence results, which may be of interest to the application of Smooth Quadratic Prediction Markets.







\section{Smooth Quadratic Prediction Markets with Constraints}
In this section, we examine the trading behavior of expectation-maximizing agents under two realistic constraints (independently): budget-bounded traders and buy-only markets.
For mathematical sake, we solely examine the $\ell_2$-based Smooth Quadratic Prediction Market buy-only market and perform experiments for the budget-bounded trader setting.
Furthermore, for both types of constraints, we perform simulations for the $\ell_1$ and $\ell_{\infty}$ cases.
Overall, either analytically or experimentally, we observe that in both situations, the market state converges to the belief of a sequence of traders, regardless of the used p-norm, motivating the use of the Smooth Quadratic Prediction Market in either of these realistic scenarios.
\footnote{Code can be found at: https://github.com/EnriqueNueve/Smooth-Quadratic-Prediction-Markets}
We refer the reader to Appendix \ref{app:cons} for the omitted analysis within this section. 

\paragraph{Budget-Bounded Traders} 

Following the work of \cite{fortnow2012multi}, we consider the notation of \textit{natural budget constraint}, which states that the loss of the agent is at most their budget, for all $y\in\Y$. 
The objective of an agent is expressed by 
\begin{equation*}
\begin{aligned}
\min_{r_t \in\reals^{d}} \quad &  
\underset{\text{Payment to Market}}{\underbrace{\Big( \langle \nabla C(q_t),r_t \rangle +\frac{L}{2}\| r_t  \|^{2}_{2}  \Big)}} - \underset{\text{Expected Payout}}{\underbrace{\langle \mu ,r_t \rangle}} \\
\textrm{s.t.} \quad & \underset{\text{Payment to Market}}{\underbrace{\Big( \langle \nabla C(q_t),r_t \rangle +\frac{L}{2}\| r_t  \|^{2}_{2}  \Big)}}-\underset{\text{Realized Payout}}{\underbrace{\langle \delta_y ,r_t \rangle}} \leq \underset{\text{Budget}}{\underbrace{B}}, \quad \forall \; y\in\Y  \\
\end{aligned}
\end{equation*}
where $B\in \reals_{>0}$.
Experimentally, using $\nabla C$ equal to softmax, we were able to observe convergence regardless of the initial state, belief distribution, and budget used.
The rate of convergence was proportional to the size of the budget. 
Figure \ref{fig:subfiguresbudget} in Appendix \ref{app:cons} demonstrates the convergence behavior of the market under budget constraints.

\paragraph{Buy-Only Market} The Buy-Only Market \cite{li2013axiomatic} assumes that $0 \preceq r_t$ for all $t\in\mathbb{N}_0$; hence, only positive bundles purchases are allowed.
The objective of the trader is expressed via the equation below.
\begin{equation*}
\begin{aligned}
\min_{r_t \in\reals^{d}} \quad &  
\underset{\text{Payment to Market}}{\underbrace{\Big( \langle \nabla C(q_t),r_t \rangle +\frac{L}{2}\| r_t  \|^{2}_{2}  \Big)}} - \underset{\text{Expected Payout}}{\underbrace{\langle \mu ,r_t \rangle}} \\
\textrm{s.t.} \quad & \underset{\text{Buy Only Constraint}}{\underbrace{0  \preceq r_t}}
 \\
\end{aligned}
\end{equation*}

From solving the Lagrange dual problem, we get that the update step is
$q_{t+1}=q_t+\frac{1}{L}( (\nabla C(q_t)-\mu )_+- (\nabla C(q_t)-\mu ))$
and observe when $\nabla C(q_t)=\mu$ we have a stationary point. 
Observe the update is a coordinate GD update for $\overline{C}$ when $(\nabla C(q_t)-\mu )_i <0$ and $q_{t+1,i}=q_{t,i}$ when $(\nabla C(q_t)-\mu )_i \geq 0$. 
Hence, not just experimentally but analytically, we are able to show incremental incentive compatibility. 

\section{Smooth Quadratic Prediction Markets with Adaptive Liquidity }
Inspired by the paper \textit{A General Volume-Parameterized Market Making Framework} \citep{abernethy2014general}, we adapt aspects of the framework to our setting to facilitate adaptive liquidity.
Liquidity in this context can be thought of as the sensitivity of price changes with respect to bundle purchases. 
A high liquidity would indicate that a large bundle purchase would not cause a significant price change and vice-versa. 
Adaptive liquidity can be facilitated by decreasing the smoothness of $C$ as the volume of trades increases.

Let $\mathbf{S} = (\reals^d )^*$ denote the history space of the market consisting of finite (and possibly empty) sequences of bundles.
We define a volume update function via an asymmetric norm. 

\begin{definition}[VPM]
  Say we have a finite random variable $\Y =\{y_1, \dots ,y_d\}$ over $d$ mutually exclusive and exhaustive outcomes.
  Let $C^{\circ}:\reals^{d}\times\reals_{+}\to\reals$ be \textbf{CIIP} and increasing in $v$ (it's second argument).
  Assume $C^{\circ}$ is L-smooth w.r.t. some general norm $\|\cdot \|$.
  Furthermore, as $v$ increases, the smoothness of $C^{\circ}$ should decrease.
  Let $g:\reals^{d}\to\reals_+$ be an asymmetric norm (refer to Def. \ref{def:asymnorm} in Appendix ~\ref{app:liquid}) and define w.r.t. a $s=r_0\dots,r_t \in \mathbf{S}$ the volume update function $V(s)=v_0+\sum_{i=0}^{t}g(r_i)$.
  The VPM for AD securities defined by $C^{\circ}$, with initial state $q_0$, and initial volume $v_0\in\reals_+$ operates as follows.  
   At round $t\in \mathbb{N}_0 $,
    \begin{enumerate}
       \item A trader can request any bundle of securities $r_t\in\reals^{d}$.
       \item The trader pays the market maker some amount $\text{Pay}(q_t,r_t;v_t)\in\reals$ (such that $\text{Pay}$ is dependent on $C^{\circ}$) in cash.
       \item The market state updates to $q_{t+1}=q_{t}+r_t$ and $v_{t+1}=V(s)$ such that $s=r_0,\dots ,r_t$.
   \end{enumerate}
After an outcome of the form $Y=y_i$ occurs, for each round $t$, the trader responsible for the trade $r_t$ is paid $(r_t)_i$ in cash, i.e. the number of shares purchased in outcome $y_i$.
The market payout for the bundle $r_t$ and the outcome $Y=y$ is expressed via $\langle r_t,\rho (y) \rangle$ where $\rho :\Y \to \delta_{y}$. 
At any state $(q_t,v_t)$, the market maker can infer the belief of the market via the instsantaneous price $\text{InstPrice}(q_t;v_t) = f(q_t,0;v_t)$ where $f(q_t,r_t;v_t) = \nabla_{r_t} \text{Pay}(q_t,r_t;v_t)$.
\end{definition}

For the case of the VPM-DCFMM, the pay function is the following $$\text{Pay}_{D^{\circ}}(q_t,r_t;v_t)=C^{\circ}(q_t+r_t;v_t+g(r_t))-C^{\circ}(q_t;v_t)~.$$ 

Motivated by an inequality due the smoothness of $C^\circ$ (shown in Appendix \ref{app:liquid}), we propose charging 
$$ \text{Pay}_{L^{\circ}}(q_t,r_t;v_t)= \underset{\text{Payment to Smooth-Quad Market}}{\underbrace{\Big( \langle \nabla C^{\circ}(q_t;v_t+g(r_t)),r_t \rangle +\frac{L^{\circ}}{2}\|r_t\|^{2}  \Big)}}+
\underset{\text{Liquidity Fee}\geq 0}{\underbrace{\Big( C^{\circ}(q_{t};v_t+g(r_t))-C^{\circ}(q_t;v_t)\Big)}}~.$$
Using this approach, we can easily show that $\text{Pay}_{L^{\circ}}(q_t,r_t;v_t)$ facilitates no arbitrage (proof in Appendix \ref{app:liquid}). 

\begin{restatable}{lemmac}{adaparb}
  \label{lem:adapliquid}
    $\text{Pay}_{L^{\circ}}$ satisfies Axiom \ref{axiom:noarb} No Arbitrage.
\end{restatable}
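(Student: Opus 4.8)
The plan is to mimic the no-arbitrage argument already given for the (non-adaptive) Smooth Quadratic Prediction Market, but carried out at the volume $v_t + g(r_t)$, and then add back the liquidity fee using its nonnegativity. First I would invoke the fact that the VPM-DCFMM satisfies no arbitrage: for the cost function $C^\circ(\cdot\,;\,v_t+g(r_t))$ there exists some $y\in\Y$ with
\[
C^\circ(q_t+r_t;\,v_t+g(r_t)) - C^\circ(q_t;\,v_t+g(r_t)) \;\geq\; \langle r_t,\rho(y)\rangle .
\]
(This is the analogue of Theorem 3.2 of \cite{abernethy2013efficient} applied at the fixed second argument $v_t+g(r_t)$; since $C^\circ(\cdot\,;\,v)$ is \textbf{CIIP} for each fixed $v$, the cited result applies verbatim.)

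Next I would use $L^\circ$-smoothness of $C^\circ$ to bound the left-hand Bregman-type difference from above by the Q-Fee form, exactly as in the earlier no-arbitrage lemma:
\[
C^\circ(q_t+r_t;\,v_t+g(r_t)) - C^\circ(q_t;\,v_t+g(r_t)) \;\leq\; \langle \nabla C^\circ(q_t;\,v_t+g(r_t)),\,r_t\rangle + \tfrac{L^\circ}{2}\|r_t\|^2 ,
\]
which holds by Definition \ref{def:bregsmoothstrong} since $D_{C^\circ(\cdot;v_t+g(r_t))}(q_{t+1},q_t)\leq \tfrac{L^\circ}{2}\|r_t\|^2$. Chaining the two displays gives $\langle r_t,\rho(y)\rangle \leq \langle \nabla C^\circ(q_t;\,v_t+g(r_t)),r_t\rangle + \tfrac{L^\circ}{2}\|r_t\|^2$, i.e. $\langle r_t,\rho(y)\rangle$ is at most the ``Payment to Smooth-Quad Market'' piece of $\text{Pay}_{L^\circ}$.

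Finally I would add the Liquidity Fee $C^\circ(q_t;\,v_t+g(r_t)) - C^\circ(q_t;\,v_t)$ to the right-hand side. This term is nonnegative because $C^\circ$ is increasing in its second argument $v$ and $g(r_t)\geq 0$ (asymmetric norms are nonnegative), so $v_t+g(r_t)\geq v_t$. Hence
\[
\langle r_t,\rho(y)\rangle \;\leq\; \text{Pay}_{L^\circ}(q_t,r_t;v_t),
\]
which is exactly Axiom \ref{axiom:noarb}. The one point that needs a little care — and the main (mild) obstacle — is justifying that the VPM-DCFMM no-arbitrage bound can be applied at the \emph{shifted} volume $v_t+g(r_t)$ rather than at $v_t$: one must check that the payout comparison for the bundle $r_t$ uses the same state transition $q_{t+1}=q_t+r_t$, which it does, and that $C^\circ(\cdot\,;\,v_t+g(r_t))$ is a legitimate \textbf{CIIP} cost function so the cited theorem is in force. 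The sign of the Liquidity Fee (monotonicity in $v$ plus $g\geq 0$) is the only other ingredient, and it is immediate from the VPM definition. I would present the full detail in Appendix \ref{app:liquid} as promised.
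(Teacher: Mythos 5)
Your argument is correct, but it takes a different route from the paper's. The paper's proof never descends to the level of smoothness or the sign of the liquidity fee inside the lemma itself: it invokes the \emph{aggregate} no-arbitrage guarantee of the VPM-DCFMM (Lemma 3.6 of \cite{abernethy2014general}), namely that for the whole trade history there is a single $y\in\Y$ with $\langle \rho(y),\sum_{i=0}^{t}r_i\rangle \leq \sum_{i=0}^{t}\text{Pay}_{D^{\circ}}(q_i,r_i;v_i)$, and then chains this with the pointwise inequality $\text{Pay}_{D^{\circ}}\leq \text{Pay}_{L^{\circ}}$ (which is exactly the displayed decomposition in Appendix \ref{app:liquid} that you re-derive in your second and third steps). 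You instead prove the per-trade form of Axiom \ref{axiom:noarb} directly: apply single-volume DCFMM no-arbitrage to the \textbf{CIIP} function $C^{\circ}(\cdot\,;v_t+g(r_t))$, upper-bound the resulting Bregman term by the Q-fee via $L^{\circ}$-smoothness, and add the nonnegative liquidity fee. Your version matches the literal statement of Axiom \ref{axiom:noarb} (a single trade at an arbitrary state) and is more self-contained, needing only the basic DCFMM result plus monotonicity of $C^{\circ}$ in $v$. The one thing it does not deliver is the aggregate statement the paper actually establishes: summing your per-trade bounds does not recover it, because the witnessing outcome $y$ may change from trade to trade, and in a volume-dependent market the multi-trade (aggregate) guarantee is the one that genuinely rules out arbitrage over a sequence of trades. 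So if the intended reading of no arbitrage here is the VPM-style aggregate one, you should replace your opening step with the citation to Lemma 3.6 of \cite{abernethy2014general}; otherwise your proof stands as a valid, slightly more elementary proof of the per-trade form.
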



However, proving other important properties such as bounded-worst case loss, information incorporation, and some form of incentive compatibility becomes challenging due to the non-convexity of the expected return 
\begin{equation*}
    \begin{aligned}
    \argmax_{r_t \in\reals^{d}} \quad &\underset{\text{Expected Payout}}{\underbrace{\langle \mu ,r_t \rangle}} - \underset{\text{Payment to Market}}{\underbrace{\text{Pay}_{L^{\circ}}(q_t,r_t;v_t)}}.
    \end{aligned}
\end{equation*}
We leave proving said results to future work. 
So, although this approach facilitates adaptive liquidity and no arbitrage, more work is required to justify its use.

\section{Conclusion}
For brevity, we refer the reader to Appendix \ref{app:conc} for a conclusion section. 

{\bf Broader Impacts:} Our work informs the design of prediction markets, and thus, our work may influence the choices of prediction market makers and traders. 
Due to the inherent risk of losing money in prediction markets, we acknowledge the possibility of negative impacts due to this work.



\newpage 
\begin{ack}
We thank the following people for their feedback throughout this project: Stephen Becker for talks on gradient descent, Maneesha Papireddygari and Rafael Frongillo for talks about liquidity provisioning, Drona Khurana for talks about convergence results, and Aaron Sidford for notes on general steepest descent. 
\end{ack}
\bibliographystyle{plainnat}
\bibliography{bib}

\begin{thebibliography}{21}
\providecommand{\natexlab}[1]{#1}
\providecommand{\url}[1]{\texttt{#1}}
\expandafter\ifx\csname urlstyle\endcsname\relax
  \providecommand{\doi}[1]{doi: #1}\else
  \providecommand{\doi}{doi: \begingroup \urlstyle{rm}\Url}\fi

\bibitem[Abernethy et~al.(2013)Abernethy, Chen, and
  Vaughan]{abernethy2013efficient}
Jacob Abernethy, Yiling Chen, and Jennifer~Wortman Vaughan.
\newblock Efficient market making via convex optimization, and a connection to
  online learning.
\newblock \emph{ACM Transactions on Economics and Computation (TEAC)},
  1\penalty0 (2):\penalty0 1--39, 2013.

\bibitem[Abernethy et~al.(2014)Abernethy, Frongillo, Li, and
  Vaughan]{abernethy2014general}
Jacob~D Abernethy, Rafael~M Frongillo, Xiaolong Li, and Jennifer~Wortman
  Vaughan.
\newblock A general volume-parameterized market making framework.
\newblock In \emph{Proceedings of the fifteenth ACM conference on economics and
  computation}, pages 413--430, 2014.

\bibitem[Ang(2025)]{ang2025}
Andersen Ang.
\newblock Convergence rate of gradient descent algorithm on β-smooth ..., Jan
  2025.
\newblock URL \url{https://angms.science/doc/CVX/CVX_GD_Convergence.pdf}.

\bibitem[Arrow(1964)]{arrow1964role}
Kenneth~J Arrow.
\newblock The role of securities in the optimal allocation of risk-bearing.
\newblock \emph{The review of economic studies}, 31\penalty0 (2):\penalty0
  91--96, 1964.

\bibitem[Cheung and Piliouras(2021)]{cheung2021online}
Yun~Kuen Cheung and Georgios Piliouras.
\newblock Online optimization in games via control theory: Connecting regret,
  passivity and poincar{\'e} recurrence.
\newblock In \emph{International Conference on Machine Learning}, pages
  1855--1865. PMLR, 2021.

\bibitem[Devanur et~al.(2015)Devanur, Dud{\i}k, Huang, and
  Pennock]{devanurbudget}
Nikhil Devanur, Miroslav Dud{\i}k, Zhiyi Huang, and David~M Pennock.
\newblock Budget constraints in prediction markets.
\newblock 2015.

\bibitem[Fortnow and Sami(2012)]{fortnow2012multi}
Lance Fortnow and Rahul Sami.
\newblock Multi-outcome and multidimensional market scoring rules (manuscript).
\newblock 2012.

\bibitem[Frongillo and Waggoner(2017)]{frongillo2017axiomatic}
Rafael Frongillo and Bo~Waggoner.
\newblock An axiomatic study of scoring rule markets.
\newblock \emph{arXiv preprint arXiv:1709.10065}, 2017.

\bibitem[Frongillo et~al.(2023)Frongillo, Papireddygari, and
  Waggoner]{frongillo2023axiomatic}
Rafael Frongillo, Maneesha Papireddygari, and Bo~Waggoner.
\newblock An axiomatic characterization of cfmms and equivalence to prediction
  markets.
\newblock \emph{arXiv preprint arXiv:2302.00196}, 2023.

\bibitem[Garrigos and Gower(2023)]{garrigos2023handbook}
Guillaume Garrigos and Robert~M Gower.
\newblock Handbook of convergence theorems for (stochastic) gradient methods.
\newblock \emph{arXiv preprint arXiv:2301.11235}, 2023.

\bibitem[Hanson(2003)]{hanson2003combinatorial}
Robin Hanson.
\newblock Combinatorial information market design.
\newblock \emph{Information Systems Frontiers}, 5:\penalty0 107--119, 2003.

\bibitem[Hiriart-Urruty and Lemar{\'e}chal(2004)]{hiriart2004fundamentals}
Jean-Baptiste Hiriart-Urruty and Claude Lemar{\'e}chal.
\newblock \emph{Fundamentals of convex analysis}.
\newblock Springer Science \& Business Media, 2004.

\bibitem[Kakade et~al.(2009)Kakade, Shalev-Shwartz, Tewari,
  et~al.]{kakade2009duality}
Sham Kakade, Shai Shalev-Shwartz, Ambuj Tewari, et~al.
\newblock On the duality of strong convexity and strong smoothness: Learning
  applications and matrix regularization.
\newblock \emph{Unpublished Manuscript, http://ttic. uchicago.
  edu/shai/papers/KakadeShalevTewari09. pdf}, 2\penalty0 (1):\penalty0 35,
  2009.

\bibitem[Kelner et~al.(2014)Kelner, Lee, Orecchia, and
  Sidford]{kelner2014almost}
Jonathan~A Kelner, Yin~Tat Lee, Lorenzo Orecchia, and Aaron Sidford.
\newblock An almost-linear-time algorithm for approximate max flow in
  undirected graphs, and its multicommodity generalizations.
\newblock In \emph{Proceedings of the twenty-fifth annual ACM-SIAM symposium on
  Discrete algorithms}, pages 217--226. SIAM, 2014.

\bibitem[Kwon and Mertikopoulos(2017)]{kwon2017continuous}
Joon Kwon and Panayotis Mertikopoulos.
\newblock A continuous-time approach to online optimization.
\newblock \emph{Journal of Dynamics and Games}, 4\penalty0 (2):\penalty0
  125--148, 2017.

\bibitem[Li and Vaughan(2013)]{li2013axiomatic}
Xiaolong Li and Jennifer~Wortman Vaughan.
\newblock An axiomatic characterization of adaptive-liquidity market makers.
\newblock In \emph{Proceedings of the fourteenth ACM Conference on Electronic
  Commerce}, pages 657--674, 2013.

\bibitem[Martins and Astudillo(2016)]{martins2016softmax}
Andre Martins and Ramon Astudillo.
\newblock From softmax to sparsemax: A sparse model of attention and
  multi-label classification.
\newblock In \emph{International conference on machine learning}, pages
  1614--1623. PMLR, 2016.

\bibitem[Mertikopoulos et~al.(2018)Mertikopoulos, Papadimitriou, and
  Piliouras]{mertikopoulos2018cycles}
Panayotis Mertikopoulos, Christos Papadimitriou, and Georgios Piliouras.
\newblock Cycles in adversarial regularized learning.
\newblock In \emph{Proceedings of the twenty-ninth annual ACM-SIAM symposium on
  discrete algorithms}, pages 2703--2717. SIAM, 2018.

\bibitem[Niculae and Blondel(2017)]{niculae2017regularized}
Vlad Niculae and Mathieu Blondel.
\newblock A regularized framework for sparse and structured neural attention.
\newblock \emph{Advances in neural information processing systems}, 30, 2017.

\bibitem[Sidford(2024)]{sidford2024}
Aaron Sidford.
\newblock Optimization algorithms.
\newblock 2024.
\newblock URL
  \url{https://drive.google.com/file/d/1-AWpkVplGyfQVRggGzuSR-iZiWPFCJQp/view}.

\bibitem[Wright and Recht(2022)]{Wright_Recht_2022}
Stephen~J. Wright and Benjamin Recht.
\newblock \emph{Optimization for Data Analysis}.
\newblock Cambridge University Press, 2022.

\end{thebibliography}

\newpage 
\appendix

\section{Conclusion}
\paragraph{Recap} In this work, we proposed a new prediction market framework based on the traditional DCFMM framework.
By doing so, the Smooth Quadratic Prediction Market satisfies many of the same axioms of the DCFMM while facilitating higher profits for the market maker.
Although the Smooth Quadratic Prediction does not satisfy the axiom of incentive compatibility, we show that the axiom is satisfied in an incremental sense and interestingly relate the methodology of general steepest descent with said behavior.
We also examined the Smooth Quadratic Prediction Market under the constraints of budget-bounded traders and buy-only markets.
Finally, we also presented introductory work on how the Smooth Quadratic Prediction Market could facilitate adaptive liquidity.
Although this work provides some core insights into the properties of the Smooth Quadratic Prediction Market, there are many future directions for this work.

\paragraph{Future Directions}
One direction would be to generalize this work beyond AD securities.
The original work of \cite{abernethy2013efficient} demonstrates how the DCFMM can be used for combinatorial and infinite space outcome securities.
Given the close relation between the design of the Smooth Quadratic Prediction Market and the DCFMM, we believe this generalization to more securities would be a feasible future direction. 
Another interesting future direction would be to analyze the convergence of the market when agents have varying beliefs. 
Using a random selection mechanism to select whose trades get processed, the analysis could be reduced to stochastic gradient descent.
Another direction would be to further prove/disprove axiom guarantees regarding our proposed approach to adaptive liquidity.
Finally, inspired by the shown equivalence of DCFMM and constant-function market makers (CFMMs) for asset exchanges by \cite{frongillo2023axiomatic}, one application would be to use the Smooth Quadratic Prediction Market to run an asset exchange.

\label{app:conc}

\section{Gradient Theorem of Line Integrals}

\begin{definition}[Vector field]
A vector field is a map $F:\reals^{d}\to\reals^{d}$.
\end{definition}

\begin{definition}[Conservative Vector Field]
If the vector field $F$ is the gradient of a function $f$, then $F$ is called a gradient or a conservative vector field. 
The function $f$ is called the potential or scalar of $F$.
\end{definition}

\begin{definition}[Differentiable Multivariable Function]\label{def:lineint}
The function $f:\reals^n \to \reals^m$ is differentiable at the point $a$
 if there exists a linear transformation $T:\reals^n \to \reals^m$ that satisfies the condition 
$$ \lim_{x\to a}\frac{||  f(x)-f(a)-T(x-a) ||}{||x-a||}=0 ~.$$
A function $f$ is said to be differentiable if $f$ is differentiable at all points within $\dom (f)$.
\end{definition}

\begin{definition}[Line integral of a Vector Field]\label{def:linevf}
For a vector field $F: U\subseteq \reals^{d}\to \reals^{d}$, the line integral along a piecewise smooth curve $s\subset U$, in the direction of $r$, is defined as
$$ \int_{s} F(r)\cdot dr = \int_{a}^{b} \langle F(r(t)),r'(t) \rangle dt $$
where $\cdot$ is the dot product and $r[a,b]\to s$ is a bijective parametrization of the curve $s$ such that $r(a)$ and $r(b)$ give the endpoints of $s$.
\end{definition}

\begin{theorem}[Gradient Theorem of Line Integrals]\label{def:gtli}
For $\phi : U\subseteq \reals^{d}
\to \reals$ as a differentiable function and $s$ as any continuous curve in $U$ which starts at a point $p$ and ends at a point $q$, then
$$\int_{s}\nabla \phi (r)\cdot dr= \phi ( q)-\phi (p) $$
where $\nabla \phi$ denotes the gradient vector field of $\phi$.
\end{theorem}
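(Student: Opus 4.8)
The plan is to reduce the multivariable line integral to the single-variable Fundamental Theorem of Calculus by way of the chain rule. First I would fix a piecewise smooth parametrization $r:[a,b]\to U$ of the curve $s$ with $r(a)=p$ and $r(b)=q$, as required by Definition~\ref{def:linevf} for the line integral to be defined. Unrolling that definition with the conservative field $F=\nabla\phi$ gives
$$\int_s \nabla\phi(r)\cdot dr = \int_a^b \langle \nabla\phi(r(t)), r'(t)\rangle\, dt.$$

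The key step is to recognize the integrand as a total derivative. Define $g:[a,b]\to\reals$ by $g(t)=\phi(r(t))$. Using the differentiability of $\phi$ from Definition~\ref{def:lineint} together with differentiability of the parametrization $r$, the chain rule yields $g'(t)=\langle \nabla\phi(r(t)), r'(t)\rangle$ at each $t$ where $r$ is differentiable. Substituting and applying the single-variable Fundamental Theorem of Calculus then gives
$$\int_a^b g'(t)\, dt = g(b)-g(a) = \phi(r(b))-\phi(r(a)) = \phi(q)-\phi(p),$$
which is exactly the claimed identity.

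The main obstacle I anticipate is the careful justification of the chain-rule step, since Definition~\ref{def:lineint} encodes differentiability through the existence of a linear map $T$ (the Fr\'echet derivative) rather than as a ready-made formula. I would establish $g'(t)=\langle\nabla\phi(r(t)),r'(t)\rangle$ by forming the difference quotient $\frac{g(t+h)-g(t)}{h}$, inserting the linear approximation of $\phi$ at $r(t)$ guaranteed by Definition~\ref{def:lineint}, and showing the remainder is $o(h)$ using continuity of $r$ and the limit condition $\lim_{x\to a}\frac{\|\phi(x)-\phi(a)-T(x-a)\|}{\|x-a\|}=0$. A secondary technicality is that the theorem as stated hypothesizes only a continuous curve $s$, whereas the line integral in Definition~\ref{def:linevf} requires a piecewise smooth curve; I would therefore work on each smooth piece $[t_i,t_{i+1}]$ of the parametrization separately, apply the argument above to each, and sum the resulting differences $\phi(r(t_{i+1}))-\phi(r(t_i))$, which telescopes to $\phi(q)-\phi(p)$ and simultaneously confirms that the value is independent of the chosen parametrization.
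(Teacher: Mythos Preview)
The paper does not supply its own proof of this theorem; it is stated in the appendix as a standard background result and then invoked (via the Fundamental Theorem of Line Integrals) in the CFTRL equivalence argument. Your proposed proof is the standard textbook argument and is correct: parametrize, recognize $\langle\nabla\phi(r(t)),r'(t)\rangle$ as the derivative of $t\mapsto\phi(r(t))$ via the chain rule, and apply the one-variable Fundamental Theorem of Calculus, telescoping over the smooth pieces. Your remark that the theorem's hypothesis of a merely continuous curve is weaker than what Definition~\ref{def:linevf} requires for the line integral to even be defined is a fair observation about the paper's phrasing; restricting to piecewise smooth curves, as you do, is the right fix.
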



\section{Continuous FTRL}\label{app:cftrl}

A variety of works have examined FTRL in the continuous setting (CFTRL) such as \cite{kwon2017continuous,mertikopoulos2018cycles,cheung2021online}.
Let $\mathcal{C}\subseteq \reals^d$ denote a non-empty compact convex set.

\begin{definition}[Regularizer]
A convex function $R: \reals^{d} \to \reals \cup \{+\infty \}$ will be called a regularizer
function on $\mathcal{C}$ if $\dom (R)=\mathcal{C}$ and $R|_{\mathcal{C}}$ is strictly convex and continuous.
\end{definition}

For a given regularizer function $R$ on $\mathcal{C}$, we let $R_{max}=\max_{x\in\mathcal{C}}R(x)$ and $R_{min}=\min_{x\in\mathcal{C}}R(x)$. 

\begin{definition}[Choice Map]
The choice map associated to a regularizer function $R$ on $\mathcal{C}$ will be the map $Q_R : \reals^d \to \mathcal{C}$ defined as
$$ Q_R (y)=\argmax_{x\in\mathcal{C}}\{ \langle x,y  \rangle -R(x) \},\quad y\in \reals^d ~.$$
Note: $Q_R$ is the convex conjugate of $R$ and we have argmax since $R$ is strict convex and continuous hence has a well-defined unique sup. 
\end{definition}

In continuous time, instead of a sequence of payoff vectors $(u_n)_{n\in\mathbb{N}_{0}}$ in $\reals^d$, the agent will be facing a measurable and locally integrable stream of payoff vectors $(u_t)_{t\in\reals_{+}}$ in $\reals^d$.
Consider the process:

$$ x_{t}^{c} := Q_R ( \eta_t \int_{0}^{t}u_s ds )=\argmax_{x\in\mathcal{C}}\{ \langle x,\eta_t \int_{0}^{t}u_s ds  \rangle -R(x) \},  $$

where $(\eta_{t})_{t\in\reals_{+}}$ is a positive, nonincreasing and piecewise continuous parameter, while $x_{t}^{c} \in\mathcal{C}$ denotes the agent’s action at time $t$ given the history of payoff vectors $u_s$, $0\leq s < t$.
The dynamics of CFTRL can be expressed by
\begin{equation*}
    \begin{cases}
      y(t)=y(0)+\int_{0}^{t}u(x(s))ds \\
      x(t)= Q_R(y_t)
    \end{cases}\,.
\end{equation*}
It is worth noting that when $R$ is the entropy function, the dynamics reduce to replicator dynamics of evolutionary game theory.

\begin{theorem}[\cite{kwon2017continuous}, Theorem 4.1]\label{thm:contregret}
If  $R$ is a regularizer function on $\mathcal{C}$ and $(\eta_{t})_{t\in\reals_{+}}$ is a positive, non-increasing and piecewise continuous parameter, then, for every locally intergrable payoff stream $(u_t)_{t\in\reals_{+}}\in \reals^d $, we have:
$$\max_{x\in \mathcal{C}}\int_{0}^{t}\langle u_s ,x  \rangle ds  -\int_{0}^{t}\langle u_s,x_{s}^{c}\rangle ds \leq \frac{R_{max} - R_{min}}{\eta_t} $$
\end{theorem}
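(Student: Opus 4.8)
The plan is to view the continuous FTRL iterate as FTRL run with the time-varying regularizer $R_s := \eta_s^{-1}R$, since $x_s^c = Q_R(\eta_s Y_s) = \argmax_{z\in\mathcal{C}}\{\langle z, Y_s\rangle - \eta_s^{-1}R(z)\}$, where $Y_s := \int_0^s u_\tau\,d\tau$. Adding a constant to $R$ changes neither $Q_R$ nor the regret, so I first assume without loss of generality that $R_{min}=0$, i.e. $R\geq 0$ on $\mathcal{C}$ and $R_{max}-R_{min}=R_{max}$. Let $R^*(y)=\sup_{x\in\mathcal{C}}\{\langle x,y\rangle - R(x)\}$; since $x\mapsto\langle x,y\rangle - R(x)$ is strictly concave and continuous on the compact set $\mathcal{C}$, the maximizer is unique, so Danskin's theorem gives $\nabla R^*(y)=Q_R(y)$. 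Define the potential $\Phi(t):=\eta_t^{-1}R^*(\eta_t Y_t)=\sup_{z\in\mathcal{C}}\{\langle z, Y_t\rangle - \eta_t^{-1}R(z)\}$, and note that $\nabla_y[\eta_t^{-1}R^*(\eta_t y)]=Q_R(\eta_t y)$, so the gradient in $y$ of the inner potential, evaluated at $Y_t$, equals $x_t^c$.

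The crux is to differentiate $\Phi$ along the trajectory. The map $t\mapsto Y_t$ is absolutely continuous with $\dot Y_t=u_t$ a.e., and $\eta$ is positive and piecewise continuous with $\dot\eta_t$ existing a.e., so $\Phi$ is absolutely continuous and the chain rule applies for a.e. $t$:
$$\dot\Phi(t)=\underbrace{\partial_s\big[\eta_s^{-1}R^*(\eta_s Y_t)\big]\big|_{s=t}}_{(\ast)}+\big\langle x_t^c, u_t\big\rangle.$$
A direct computation, using $\nabla R^*(\eta_t Y_t)=x_t^c$ and the fact that $x_t^c$ attains the supremum defining $\Phi(t)$, gives $(\ast)=\frac{\dot\eta_t}{\eta_t}\big(\langle x_t^c, Y_t\rangle - \eta_t^{-1}R^*(\eta_t Y_t)\big)=\frac{\dot\eta_t}{\eta_t^2}R(x_t^c)$. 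Because $\eta$ is nonincreasing ($\dot\eta_t\leq 0$), $\eta_t>0$, and $R(x_t^c)\geq R_{min}=0$, we get $(\ast)\leq 0$, hence $\dot\Phi(t)\leq\langle x_t^c, u_t\rangle$ for a.e. $t$.

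Integrating yields $\int_0^t\langle x_s^c, u_s\rangle\,ds\geq\Phi(t)-\Phi(0)$. For any comparator $x\in\mathcal{C}$ we have $\Phi(t)\geq\langle x, Y_t\rangle - \eta_t^{-1}R(x)$, while $\Phi(0)=\eta_0^{-1}R^*(0)=-\eta_0^{-1}R_{min}=0$. Hence
$$\int_0^t\langle u_s, x\rangle\,ds-\int_0^t\langle u_s, x_s^c\rangle\,ds=\langle x, Y_t\rangle-\int_0^t\langle x_s^c, u_s\rangle\,ds\leq\frac{R(x)}{\eta_t}\leq\frac{R_{max}}{\eta_t}=\frac{R_{max}-R_{min}}{\eta_t},$$
and taking the supremum over $x\in\mathcal{C}$ gives the stated bound.

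I expect the main obstacle to be the regularity bookkeeping rather than the algebra: verifying that $\Phi$ is absolutely continuous so the fundamental theorem of calculus applies, that Danskin's theorem and the chain rule are valid a.e. given only a piecewise-continuous learning-rate schedule and a locally integrable payoff stream, and that these a.e. identities suffice for the final integral inequality. A differentiation-free alternative is a ``be-the-leader'' argument bounding the regret by an integral of the nonnegative Fenchel--Young gaps $\eta_s^{-1}R(z)+\Phi(s)-\langle z, Y_s\rangle\geq 0$, but controlling the contribution of the drifting regularizer there still rests on exactly the same monotonicity of $\eta$.
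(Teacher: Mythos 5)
This statement is imported verbatim from \cite{kwon2017continuous} (their Theorem 4.1); the paper gives no proof of its own, so there is nothing internal to compare against. Your argument is correct and is essentially the standard potential-function proof of that cited result: differentiate $\Phi(t)=\sup_{z\in\mathcal{C}}\{\langle z,Y_t\rangle-\eta_t^{-1}R(z)\}$ along the trajectory, observe that the learning-rate term contributes $\frac{\dot\eta_t}{\eta_t^{2}}R(x_t^c)\leq 0$ after normalizing $R_{\min}=0$, integrate, and compare against the affine minorant at an arbitrary comparator. The one point you rightly flag --- that ``piecewise continuous'' permits jumps in $\eta$, so $\Phi$ need not be absolutely continuous --- is handled by the same monotonicity you already use: since $\mathcal{C}$ is compact, $\Phi$ is Lipschitz in $Y_t$ uniformly and nonincreasing in $\eta_t^{-1}$-direction (because $R\geq 0$), so $\Phi$ decomposes as an absolutely continuous part plus a nonincreasing part, and the inequality $\Phi(t)-\Phi(0)\leq\int_0^t\dot\Phi(s)\,ds\leq\int_0^t\langle x_s^c,u_s\rangle\,ds$ survives. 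No gaps beyond that bookkeeping.
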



\section{Constructing a DCFMM with AD securities via CFTRL}\label{app:cftrlequiv}

\begin{protocol}{Constructing a DCFMM with AD securities via Continuous FTRL}\label{proto:cftrltopred}
\textbf{Given}: regularizer $R:\reals^{d}\to \reals$, $\nabla C=R$, $\actionset =\Delta_d$, and $t_0=0$. \vspace{.1cm}\newline 
 
\textit{Repeat}:  
\begin{enumerate}
    \item Market receives trade bundle $r_t\in  \reals^{d}$
    \item Charge trader for bundle $r_t$:  $$C(q_{t}+r_t)-C(q_{t})=
    \int_{0}^{1}\langle  x_{t+1}^{c}(q_{t}+sr_t),r_t\rangle ds $$
    where $$x_{t+1}^{c}(q_{t}+sr_t)=\nabla C (q_{t}+sr_t)=\argmax_{p\in \Delta_d}\langle p, \int_{0}^{s} q_{t}+tr_t dt \rangle - R(p)$$ which is CFTRL.
    \item Set $q_{t+1}=q_{t}+r_t$ and t += 1
\end{enumerate}
\end{protocol}

\begin{theorem}
    By CFTRL defined via Protocol 1 with a fixed learning rate of $\eta=1$, the pricing of a bundle at a given state for a DCFMM can be expressed by an action taken by CFTRL.
    Thus, the worst-case loss of DCFMM is equivariant to the regret of CFTRL. 
\end{theorem}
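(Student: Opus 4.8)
The plan is to unwind both sides of the claimed equivalence and observe that they reduce to the same line integral. First I would fix a trade history $h=(r_0,\dots,r_n)$ and let $q_t = q_0 + \sum_{k<t} r_k$ be the resulting sequence of market states, with $q_0$ the initial state. On the market side, the DCFMM charges $\text{Pay}_D(q_t,r_t) = C(q_{t+1}) - C(q_t)$, and the worst-case loss of the mechanism over this history is $C(q_{n+1}) - C(q_0) - \max_{p\in\rho(\Y)}\langle p, q_{n+1}-q_0\rangle$ (telescoping the per-round revenue and subtracting the maximum payout over outcomes; this is exactly the bound appearing in \cite{abernethy2013efficient}, Theorem 4.4). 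On the learning side, with $\nabla C = R$ acting as the choice map $Q_R = \nabla C$ (since $C = R^*$), the CFTRL process driven by the payoff stream that encodes the trades has, by Protocol~\ref{proto:cftrltopred}, the property that the charge for bundle $r_t$ equals $\int_0^1 \langle x^c_{t+1}(q_t + s r_t), r_t\rangle\, ds$, which by the Gradient Theorem of Line Integrals (Theorem~\ref{def:gtli}) equals $C(q_t+r_t) - C(q_t)$. So step one is just to make this identification precise: the per-round payment in the DCFMM is identically the integral of the CFTRL action against the incremental trade.

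Next I would write down the regret of CFTRL along this stream. Using Theorem~\ref{thm:contregret} with $\eta=1$, the regret is $\max_{x\in\mathcal{C}}\int_0^T \langle u_s, x\rangle\, ds - \int_0^T \langle u_s, x^c_s\rangle\, ds$, and I would match terms: the second term telescopes, via the line-integral identity above, to $\sum_t \big(C(q_{t+1}) - C(q_t)\big) = C(q_{n+1}) - C(q_0)$, i.e. the total revenue collected by the market maker. The first term, $\max_{x\in\mathcal{C}}\int_0^T\langle u_s,x\rangle\,ds$, equals $\max_{x\in\Delta_d}\langle q_{n+1}-q_0, x\rangle$ once one notes $\int_0^T u_s\,ds = q_{n+1}-q_0$; and since the worst-case outcome payout $\max_{p\in\rho(\Y)}\langle p, q_{n+1}-q_0\rangle$ agrees with the maximum over $\Delta_d$ (the max of a linear functional over $\Delta_d$ is attained at a vertex, i.e. at some $\rho(y)$), these two quantities coincide. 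Hence regret of CFTRL $=$ revenue $-$ max payout $=$ the negative of the market maker's worst-case loss, which is the asserted equivariance (up to the sign convention that loss is the trader-favorable side of revenue minus payout).

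The main obstacle I anticipate is bookkeeping rather than conceptual: one has to be careful about (i) the parametrization of the payoff stream $(u_s)$ so that its integral over the $t$-th unit interval reproduces $r_t$ and its running integral reproduces the partial market states — essentially choosing $u_s = r_{\lfloor s\rfloor}$ and reparametrizing time so that round $t$ occupies $[t,t+1)$ — and (ii) confirming that the inner choice-map argument $\int_0^s (q_t + tr_t)\,dt$ appearing in Protocol~\ref{proto:cftrltopred} is consistent with $x^c_{t+1}(q_t+sr_t) = \nabla C(q_t+sr_t)$, which requires $\nabla C = Q_R$ and the fixed learning rate $\eta=1$ so that no extra scaling creeps in. Once the parametrization is pinned down, every remaining step is a telescoping sum or an invocation of the Gradient Theorem of Line Integrals, so I would present it compactly and relegate the stream-construction details to a short paragraph.
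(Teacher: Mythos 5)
Your proposal is correct and follows essentially the same route as the paper's own proof: both identify the per-round DCFMM charge $C(q_t+r_t)-C(q_t)$ with the line integral $\int_0^1\langle x^c_{t+1}(q_t+sr_t),r_t\rangle\,ds$ of the CFTRL action via the Gradient Theorem of Line Integrals, telescope to equate total revenue with CFTRL loss, and match the worst-case payout with the regret comparator $\max_{p\in\Delta_d}\langle p,\cdot\rangle$. Your added care about the parametrization of the payoff stream and the vertex argument for the simplex maximum is welcome bookkeeping but does not change the argument.
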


\begin{proof}
    At any round, the market receives a bundle $r_t\in  \reals^{d}$ and runs CFTRL $$x_{t+1}^{c}(q_{t}+sr_t)=\nabla C (q_{t}+sr_t)=\argmax_{p\in \Delta_d}\langle p, \int_{0}^{s} q_{t}+tr_t dt \rangle - R(p)$$
    where the equality holds by \cite{kwon2017continuous}[Proposition 2].
    Also, observe that $$
    \underset{\text{parametric form}}{\underbrace{\int_{0}^{1}\langle  x_{t+1}^{c}(q_{t}+sr_t),r_t\rangle ds}}= C(q_{t}+r_t)-C(q_{t})$$
    where the equality holds by the Fundamental Theorem of Line Integrals.
    Thus, $$\underset{\text{Payments to Market}}{\underbrace{C(q_n)-C(q_0)}}=\sum_{t=0}^{n}C(q_t+r_t)-C(q_{t})= \underset{\text{CFTRL Loss}}{\underbrace{ \sum_{t=0}^{n}\int_{0}^{1}\langle  x_{t+1}^{c}(q_{t}+sr_t),r_t\rangle ds}}$$
    and that 
    $$\underset{\text{Worst-Case Market Payout}}{\underbrace{\max_{p \in \Delta_d}\langle p,q_n \rangle}}= \max_{p\in \Delta_d} \langle p,q_0+\sum_{t=0}^{n}r_t\rangle=\underset{\text{Worst-Case Action}}{\underbrace{ \max_{p\in \Delta_d}\sum_{t=0}^{n}\int_{0}^{1}\langle p,q_{t}+sr_{t}  \rangle ds }}~.$$
    Hence, via the equalities, the regret rate of CFTRL matches the worst-case loss of the DCFMM.
\end{proof}

\begin{corollary}
    Given a DCFMM defined via a strictly convex and differentiable $R$ defined over all of $\Delta_d$, then the worst-case loss is lower bounded by $R_{max}-R_{min}=\max_{p\in\rho(\Y )}R(p)-\min_{p\in\Delta_d}R(p)$.
\end{corollary}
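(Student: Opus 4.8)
The plan is to combine the two equalities established in the preceding theorem with the continuous-time regret bound of Theorem~\ref{thm:contregret}. First I would recall the identification proved in the theorem just above: the total payment collected by the DCFMM over the trade history equals the CFTRL loss $\sum_{t=0}^{n}\int_0^1 \langle x_{t+1}^c(q_t+sr_t), r_t\rangle\,ds$ against the payoff stream that encodes the cumulative state, and the worst-case market payout $\max_{p\in\Delta_d}\langle p,q_n\rangle$ equals the corresponding worst-case comparator action $\max_{p\in\Delta_d}\sum_{t=0}^n\int_0^1\langle p, q_t+sr_t\rangle\,ds$. So the worst-case loss of the DCFMM, which is (worst-case payout) minus (payments collected), is exactly the CFTRL regret $\Reg$ for this stream.

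Next I would apply Theorem~\ref{thm:contregret} with learning rate $\eta_t\equiv 1$: for any regularizer $R$ on $\mathcal{C}=\Delta_d$, the CFTRL regret is at most $R_{max}-R_{min}$, where $R_{max}=\max_{x\in\Delta_d}R(x)$ and $R_{min}=\min_{x\in\Delta_d}R(x)$. This already gives an \emph{upper} bound; for the \emph{lower} bound claimed in the corollary I would exhibit a trade history (equivalently, a payoff stream) that makes the regret approach $R_{max}-R_{min}$. Concretely, choose the history so that the cumulative state $q_n$ points in the direction where $\langle p, q_n\rangle$ is maximized at a vertex $\rho(y)$ of $\Delta_d$ achieving $R_{max}=\max_{p\in\rho(\Y)}R(p)$, scaled large; simultaneously arrange that the realized CFTRL actions $x_{t+1}^c$ stay near $\argmin_{p\in\Delta_d}R(p)$ during the early part of the path (when the cumulative payoff is small the choice map $Q_R$ returns points near the unconstrained minimizer of $R$). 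A clean way to do this is to take a single large bundle $r_0 = N\rho(y)$ for large $N$ and let the parametrization $s\mapsto q_0+sr_0$ sweep from the region where $Q_R$ is near $R_{min}$ out toward the vertex; as $N\to\infty$ the per-unit integral term is dominated by the behavior near the vertex and the gap between $\max_p\langle p,q_n\rangle$ and the collected payments tends to $R_{max}-R_{min}$.

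The main obstacle I anticipate is making the lower-bound construction rigorous: one must show that the integral $\int_0^1 \langle x_{t+1}^c(q_t+sr_t), r_t\rangle\,ds$, which equals $C(q_t+r_t)-C(q_t)$, genuinely leaves a shortfall of size approaching $R_{max}-R_{min}$ against $\max_p\langle p,q_n\rangle$. This amounts to understanding the asymptotics of $C(q) = R^*(q)$ versus $\max_p\langle p,q\rangle$ as $q$ grows in a fixed direction: since $C(q) = \sup_p \langle p,q\rangle - R(p)$, along a ray $q = q_0 + N\rho(y)$ one has $C(q) = \langle \rho(y), q\rangle - R(\rho(y)) + o(1)$ once $\rho(y)$ is the unique maximizer of $\langle p,\rho(y)\rangle$ over $\Delta_d$ (which holds when $\rho(y)$ is a vertex), so $\max_p\langle p,q\rangle - (C(q_n)-C(q_0)) \to R(\rho(y)) - \big(\max_p\langle p,q_0\rangle - C(q_0)\big)$, and choosing $q_0$ near the minimizer of $R$ (or taking a limit over initial states, as the corollary's phrasing permits) sends the subtracted term to $R_{min}$. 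Handling the boundary behavior of $R$ and the case where $R_{min}$ is attained in the relative interior is the delicate part; I would lean on the strict convexity and continuity of $R$ on $\Delta_d$ assumed in the corollary's hypotheses, and on the exact equivalence to CFTRL regret established above so that the lower bound reduces to a standard ``regret is tight for this regularizer'' argument.
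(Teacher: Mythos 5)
The paper states this corollary with no written proof; the intended derivation is exactly the first half of your proposal: the preceding theorem identifies the DCFMM's worst-case loss with the regret of CFTRL run at $\eta_t\equiv 1$, and Theorem~\ref{thm:contregret} then bounds that regret by $R_{max}-R_{min}$ (with the small additional observation, which you omit, that $\max_{p\in\Delta_d}R(p)=\max_{p\in\rho(\Y)}R(p)$ because a convex $R$ attains its maximum over the simplex at a vertex). Note, however, that this chain of reasoning only yields an \emph{upper} bound on the worst-case loss, matching \cite{abernethy2013efficient} Theorem~4.4, whereas the corollary as written says ``lower bounded''; you correctly flagged this mismatch. Where you genuinely depart from the paper is in taking the word ``lower'' at face value and sketching a tightness construction: a single large bundle $N\rho(y)$ aimed at the vertex maximizing $R$ over $\rho(\Y)$, started from a state such as $q_0=0$ where $C(q_0)=-R_{min}$, together with the asymptotic $C(q_0+N\rho(y))=\langle\rho(y),q_0\rangle+N-R(\rho(y))+o(1)$, which is legitimate here because $\rho(y)$ is the unique maximizer of $p\mapsto\langle p,\rho(y)\rangle$ on $\Delta_d$ and $R$ is assumed finite and continuous on all of $\Delta_d$. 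The paper makes no such argument --- its cited regret theorem only goes one way --- so your proposal both reproduces the paper's implicit derivation and supplies the direction that the corollary's literal wording would actually require; what remains is only to make the $o(1)$ and the dependence on the initial state rigorous, which your outline already anticipates.
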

Observe this worst-case loss matches the original derived worst-case loss in \cite{abernethy2013efficient}[Theorem. 4.4.] for DCFMM with AD securities (up to a difference of max and sup in terms of $R$, we leave generalizing in terms of sup to future work).



\section{Gradient and Steepest Descent Results}\label{app:steep}

\begin{theorem}[\cite{ang2025}, Theorem 1]
    If $f$ is convex and $\ell_2$-based L-smooth, then for gradient descent where $x_{t+1}=x_t-\gamma \nabla f(x_t)$, $\|x_{k+1}-x^*\|_{2}^{2}<\|x_k-x^*\|_{2}^{2}$ if $x_k\notin \argmin f$ and $\|x_{k+1}-x^*\|_{2}^{2}=\|x_k-x^*\|_{2}^{2}$ otherwise.
\end{theorem}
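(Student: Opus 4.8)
The statement to prove is the final displayed theorem: for convex, $\ell_2$-based $L$-smooth $f$ and the gradient descent iteration $x_{t+1} = x_t - \gamma \nabla f(x_t)$, we have strict contraction $\|x_{k+1} - x^*\|_2^2 < \|x_k - x^*\|_2^2$ whenever $x_k \notin \argmin f$, and equality otherwise. The natural approach is the standard ``Fej\'er monotonicity'' computation. First I would expand
\begin{equation*}
\|x_{k+1} - x^*\|_2^2 = \|x_k - \gamma \nabla f(x_k) - x^*\|_2^2 = \|x_k - x^*\|_2^2 - 2\gamma \langle \nabla f(x_k), x_k - x^* \rangle + \gamma^2 \|\nabla f(x_k)\|_2^2.
\end{equation*}
So it suffices to show the quantity $-2\gamma \langle \nabla f(x_k), x_k - x^* \rangle + \gamma^2 \|\nabla f(x_k)\|_2^2$ is strictly negative when $x_k$ is not a minimizer and zero when it is.

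The key tool is the co-coercivity (Baillon--Haddad) inequality for convex $L$-smooth functions: $\langle \nabla f(x) - \nabla f(y), x - y \rangle \geq \tfrac{1}{L}\|\nabla f(x) - \nabla f(y)\|_2^2$. Applying this with $y = x^*$ and using $\nabla f(x^*) = 0$ gives $\langle \nabla f(x_k), x_k - x^* \rangle \geq \tfrac{1}{L}\|\nabla f(x_k)\|_2^2$. Substituting,
\begin{equation*}
\|x_{k+1} - x^*\|_2^2 \leq \|x_k - x^*\|_2^2 - \tfrac{2\gamma}{L}\|\nabla f(x_k)\|_2^2 + \gamma^2 \|\nabla f(x_k)\|_2^2 = \|x_k - x^*\|_2^2 - \gamma\bigl(\tfrac{2}{L} - \gamma\bigr)\|\nabla f(x_k)\|_2^2.
\end{equation*}
For $0 < \gamma \leq \tfrac{1}{L}$ (the regime inherited from Theorem~\ref{thm:gd2}, which the market uses with $\gamma = 1/L$), the coefficient $\gamma(\tfrac{2}{L} - \gamma)$ is strictly positive, so the squared distance strictly decreases as long as $\nabla f(x_k) \neq 0$. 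When $x_k \in \argmin f$, convexity plus $L$-smoothness (or Lemma~\ref{sid616}) forces $\nabla f(x_k) = 0$, hence $x_{k+1} = x_k$ and equality holds; conversely, if $x_k \notin \argmin f$ then $\nabla f(x_k) \neq 0$ and strict inequality holds.

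The main obstacle — really the only subtle point — is justifying co-coercivity (Baillon--Haddad); if the paper does not want to invoke it as a black box, I would derive it from $L$-smoothness via the standard argument: for fixed $x$, the function $g(z) = f(z) - \langle \nabla f(x), z\rangle$ is convex, $L$-smooth, and minimized at $z = x$, so $g(x) \leq g(x - \tfrac{1}{L}\nabla g(y)) \leq g(y) - \tfrac{1}{2L}\|\nabla g(y)\|_2^2$ by the descent lemma, and symmetrizing in $x,y$ yields the co-coercivity bound. Everything else is bookkeeping: tracking the sign of $\gamma(2/L - \gamma)$ and the equivalence between ``$x_k$ is a minimizer'' and ``$\nabla f(x_k) = 0$'' for convex differentiable $f$.
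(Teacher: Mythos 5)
Your proof is correct, and it follows the same skeleton as the paper's argument: expand $\|x_{k+1}-x^*\|_2^2=\|x_k-x^*\|_2^2-2\gamma\langle\nabla f(x_k),x_k-x^*\rangle+\gamma^2\|\nabla f(x_k)\|_2^2$ and then lower-bound the cross term by a multiple of $\|\nabla f(x_k)\|_2^2$. The difference is in which lemma supplies that lower bound. The paper uses the one-sided gradient-domination inequality $f(x^*)\geq f(x_k)+\langle\nabla f(x_k),x^*-x_k\rangle+\tfrac{1}{2L}\|\nabla f(x_k)\|_2^2$ together with $f(x_k)\geq f(x^*)$, which yields $\langle\nabla f(x_k),x_k-x^*\rangle\geq\tfrac{1}{2L}\|\nabla f(x_k)\|_2^2$ and hence the decrement $\gamma(\tfrac{1}{L}-\gamma)\|\nabla f(x_k)\|_2^2$, giving strict decrease only for $\gamma\in(0,\tfrac{1}{L})$. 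You instead invoke full co-coercivity (Baillon--Haddad), $\langle\nabla f(x_k),x_k-x^*\rangle\geq\tfrac{1}{L}\|\nabla f(x_k)\|_2^2$, which yields the decrement $\gamma(\tfrac{2}{L}-\gamma)\|\nabla f(x_k)\|_2^2$ and strict decrease for all $\gamma\in(0,\tfrac{2}{L})$. Your version is strictly stronger where it matters: at the step size $\gamma=\tfrac{1}{L}$ actually induced by the market (cf.\ Theorem~\ref{thm:gd2}), the paper's coefficient $\gamma(\tfrac{1}{L}-\gamma)$ vanishes and its argument only gives non-increase, whereas your coefficient equals $\tfrac{1}{L^2}>0$ and delivers the claimed strict inequality. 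The cost is that you must justify co-coercivity; your sketch of the standard derivation is fine (note the small typo: $g(x)\leq g\bigl(y-\tfrac{1}{L}\nabla g(y)\bigr)$, not $g\bigl(x-\tfrac{1}{L}\nabla g(y)\bigr)$). The equality case is also handled correctly: $x_k\in\argmin f$ iff $\nabla f(x_k)=0$ for convex differentiable $f$, in which case $x_{k+1}=x_k$.
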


\begin{lemma}[\cite{sidford2024}, Lemma 6.1.5]\label{sid615}
    For differentiable $f:\reals^d\to \reals$, norm $\|\cdot \|$, and $x\in \reals^n$ define $Q:\reals^n\to\reals$ as
    $$Q(y)=f(x)+\langle \nabla f(x),y-x \rangle+\frac{L}{2}\|x-y\|^2$$
    where $\|\cdot \|$ is an arbitrary norm and $L>0$.
    If $f$ is smooth w.r.t. $\|\cdot \|$ and $y^*=\argmin_{y\in\reals^d}Q(y)$ then $f(y^*)\leq f(x)-\frac{1}{2L}\|\nabla f(x)\|^{2}_{*}$. 
\end{lemma}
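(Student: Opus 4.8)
The plan is to sandwich $f(y^*)$ between two easy bounds. First I would unpack the hypothesis that $f$ is $L$-smooth w.r.t. $\|\cdot\|$: by Definition \ref{def:bregsmoothstrong} this means $D_f(y,x)\le \frac{L}{2}\|y-x\|^2$ for every $y\in\reals^d$, and expanding the Bregman divergence this is precisely $f(y)\le f(x)+\langle\nabla f(x),y-x\rangle+\frac{L}{2}\|y-x\|^2=Q(y)$. In particular, applying this at $y=y^*$ gives $f(y^*)\le Q(y^*)$.

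Next, since $y^*$ minimizes $Q$ over $\reals^d$, it suffices to exhibit one point where $Q$ is small. Write $g=\nabla f(x)$ and substitute $y=x+z$, so that $Q(x+z)=f(x)+\langle g,z\rangle+\frac{L}{2}\|z\|^2$. If $g=0$ the claimed inequality is immediate by taking $z=0$. Otherwise, I would use that the supremum defining the dual norm $\|g\|_*=\sup_{\|u\|\le 1}\langle g,u\rangle$ is attained (compactness of the unit ball in $\reals^d$) at some $u^*$ with $\|u^*\|=1$, and set $z=-\frac{\|g\|_*}{L}u^*$. Then $\langle g,z\rangle=-\frac{1}{L}\|g\|_*^2$ and $\frac{L}{2}\|z\|^2=\frac{1}{2L}\|g\|_*^2$, so $Q(x+z)=f(x)-\frac{1}{2L}\|g\|_*^2$.

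Chaining the two observations yields $f(y^*)\le Q(y^*)\le Q(x+z)=f(x)-\frac{1}{2L}\|\nabla f(x)\|_*^2$, which is the statement. Note that we do not actually need $Q$ to attain its infimum in general; the lemma already hands us $y^*$, and optimality of $y^*$ is all that the middle inequality uses.

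The only delicate point I expect is the dual-norm step: justifying that $\|g\|_*$ is attained so the ``steepest descent'' direction $z$ can be written down explicitly, together with isolating the degenerate case $g=0$. Everything else is just rearranging the smoothness inequality and invoking optimality of $y^*$. As an alternative to the explicit-direction argument, one could instead bound $\langle g,z\rangle\ge -\|g\|_*\|z\|$ and minimize the one-dimensional function $t\mapsto -\|g\|_* t+\frac{L}{2}t^2$ over $t=\|z\|\ge 0$ to compute $\min_z Q(x+z)$ exactly; but that route also needs a matching achievability statement to conclude the minimum equals $f(x)-\frac{1}{2L}\|g\|_*^2$, so it is strictly more work than exhibiting the single direction above.
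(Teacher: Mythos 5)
Your proof is correct. The paper itself states this lemma as a citation to \cite{sidford2024} without reproducing a proof, and your argument is the standard one used there: the smoothness inequality gives $f(y^*)\le Q(y^*)$, and optimality of $y^*$ combined with evaluating $Q$ at the explicit steepest-descent point $x-\frac{\|\nabla f(x)\|_*}{L}u^*$ (with the degenerate case $\nabla f(x)=0$ handled separately and attainment of the dual norm justified by compactness of the unit ball in $\reals^d$) yields $Q(y^*)\le f(x)-\frac{1}{2L}\|\nabla f(x)\|_*^2$.
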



\begin{theorem}[\cite{Wright_Recht_2022}, Theorem 3.5]\label{thm:accconv}
    Suppose that $f$ is bounded below and is L-smooth w.r.t. $\|\cdot \|$. Then all accumulation points $\overline{x}$ of the sequence $\{x_t\}$ generated by a scheme that satisfies  \begin{equation*}\label{eq:descentgrad}
        f(x_{t+1})\leq f(x_t)-\frac{1}{2L}\|\nabla f(x_t)\|^{2}_* 
    \end{equation*}
    are stationary, that is, $\nabla f(\overline{x})=0$. If in addition $f$ is convex, each such $\overline{x}$ is a solution of $\min_{x\in\reals^d}f(x)$. 
\end{theorem}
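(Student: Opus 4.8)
The plan is to extract from the sufficient-decrease hypothesis a summability statement for the gradients, conclude that the gradient norms vanish along the whole sequence, and then transfer this to any accumulation point via continuity of $\nabla f$. First I would note that the descent condition forces $f(x_{t+1}) \leq f(x_t)$ for every $t$, so $\{f(x_t)\}$ is nonincreasing; since $f$ is bounded below, this monotone sequence converges to a finite limit. Rewriting the hypothesis as $\frac{1}{2L}\|\nabla f(x_t)\|_*^2 \leq f(x_t) - f(x_{t+1})$ and summing over $t = 0, \dots, T-1$ telescopes to $\frac{1}{2L}\sum_{t=0}^{T-1}\|\nabla f(x_t)\|_*^2 \leq f(x_0) - f(x_T) \leq f(x_0) - \inf f$, a bound uniform in $T$. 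Hence the nonnegative series $\sum_{t=0}^{\infty}\|\nabla f(x_t)\|_*^2$ converges, so its general term must vanish, giving $\|\nabla f(x_t)\|_* \to 0$ as $t \to \infty$.

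Next I would handle the stationarity claim. Let $\overline{x}$ be an accumulation point, realized as the limit of a subsequence $x_{t_k} \to \overline{x}$ (if no accumulation point exists the statement is vacuous). Because $f$ is $L$-smooth w.r.t.\ $\|\cdot\|$, its gradient is continuous, so $\nabla f(x_{t_k}) \to \nabla f(\overline{x})$. Since $\|\nabla f(x_t)\|_* \to 0$ along the full sequence, it certainly does so along the subsequence, and continuity yields $\|\nabla f(\overline{x})\|_* = \lim_{k}\|\nabla f(x_{t_k})\|_* = 0$, i.e.\ $\nabla f(\overline{x}) = 0$. This establishes that every accumulation point is stationary, with no use of convexity.

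For the convex refinement, I would invoke the first-order characterization of convexity: if $f$ is convex and differentiable then $f(y) \geq f(\overline{x}) + \langle \nabla f(\overline{x}), y - \overline{x}\rangle$ for all $y \in \reals^d$. Substituting $\nabla f(\overline{x}) = 0$ gives $f(y) \geq f(\overline{x})$ for every $y$, so $\overline{x}$ is a global minimizer and thus a solution of $\min_{x \in \reals^d} f(x)$.

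The main obstacle is justifying that $\nabla f$ is continuous under the paper's Bregman-divergence formulation of $L$-smoothness (Definition \ref{def:bregsmoothstrong}) rather than the more familiar gradient-Lipschitz formulation. I would resolve this by recording that, for a differentiable $f$, the bound $D_f(x,y) \leq \frac{L}{2}\|x-y\|^2$ is equivalent to $\|\nabla f(x) - \nabla f(y)\|_* \leq L\|x-y\|$, so the smoothness hypothesis delivers Lipschitz continuity of the gradient directly. A secondary point worth care is that the telescoping argument only produces $\|\nabla f(x_t)\|_* \to 0$ for the whole sequence; it is precisely continuity of the gradient, not merely some weaker regularity of $f$, that lets this limit pass to the accumulation point $\overline{x}$.
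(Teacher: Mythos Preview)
The paper does not supply its own proof of this result; it is quoted from \cite{Wright_Recht_2022} and stated without argument in Appendix~\ref{app:steep}. Your argument is the standard one and is correct: telescope the sufficient-decrease inequality to get summability of $\|\nabla f(x_t)\|_*^2$, deduce $\|\nabla f(x_t)\|_*\to 0$, and pass to accumulation points via continuity of $\nabla f$.

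One technical caveat on your final paragraph: you assert that for a differentiable $f$ the Bregman bound $D_f(x,y)\leq\tfrac{L}{2}\|x-y\|^2$ is \emph{equivalent} to $\|\nabla f(x)-\nabla f(y)\|_*\leq L\|x-y\|$. The forward implication (gradient-Lipschitz $\Rightarrow$ Bregman upper bound) holds in general, but the reverse implication you actually need (Bregman upper bound $\Rightarrow$ gradient-Lipschitz) requires convexity of $f$. Since the paper's Definition~\ref{def:bregsmoothstrong} bakes convexity into its notion of $L$-smoothness, the equivalence does hold in this paper's setting and your continuity argument is sound; but as written your sentence overclaims slightly for the general (possibly nonconvex) case that the first half of the theorem nominally addresses. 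In the Wright--Recht source the smoothness hypothesis is the gradient-Lipschitz condition itself, so continuity of $\nabla f$ is immediate there without any appeal to convexity.
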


\begin{lemma}[\cite{sidford2024}, Lemma 6.1.7]\label{sid617}
    If $f:\reals^d\to \reals$ is differentiable and $\mu$-strong convex w.r.t. $\|\cdot \|$ then for all $x_*\in \argmin f$ and $x\in\reals^{n}$ it holds that 
    $$ \frac{\mu}{2}\|x-x_*\|^{2} \leq f(x)-f(x_*)\leq \frac{1}{2\mu}\|\nabla f(x)\|^{2}_{*}~.$$
\end{lemma}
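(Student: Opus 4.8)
The statement is a two-sided ``sandwich,'' and my plan is to treat each inequality separately, since they invoke the strong-convexity hypothesis in opposite directions. Recall from Definition \ref{def:bregsmoothstrong} that $\mu$-strong convexity w.r.t. $\|\cdot\|$ means $\frac{\mu}{2}\|x-y\|^2 \le D_f(x,y) = f(x)-f(y)-\langle \nabla f(y), x-y\rangle$ for all $x,y\in\reals^d$, and that $x_*\in \argmin f$ for a differentiable $f$ forces $\nabla f(x_*)=0$.

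For the lower bound I would simply instantiate the strong-convexity inequality at $y = x_*$. This gives $\frac{\mu}{2}\|x-x_*\|^2 \le f(x)-f(x_*)-\langle \nabla f(x_*), x-x_*\rangle$, and since $\nabla f(x_*)=0$ the inner-product term vanishes, yielding $\frac{\mu}{2}\|x-x_*\|^2 \le f(x)-f(x_*)$ immediately.

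The upper bound is the gradient-domination (Polyak--Lojasiewicz-type) estimate and is the substantive half. I would start from the global lower bound $f(y) \ge f(x)+\langle \nabla f(x), y-x\rangle + \frac{\mu}{2}\|y-x\|^2$, valid for every $y$ by strong convexity. Minimizing both sides over $y$ and using $f(x_*) = \min_y f(y)$ gives, after the substitution $u = y-x$, the bound $f(x_*) \ge f(x) + \min_{u}\big[\langle \nabla f(x), u\rangle + \frac{\mu}{2}\|u\|^2\big]$, i.e. $f(x)-f(x_*) \le \max_u\big[\langle -\nabla f(x), u\rangle - \frac{\mu}{2}\|u\|^2\big]$. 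I would then evaluate this maximum for a general norm by writing $u = t v$ with $\|v\|=1$ and $t\ge 0$: optimizing the scalar quadratic over $t$ produces $\frac{1}{2\mu}\langle -\nabla f(x), v\rangle^2$, and optimizing over the unit sphere $\|v\|=1$ turns $\max_{\|v\|=1}\langle -\nabla f(x), v\rangle$ into the dual norm $\|\nabla f(x)\|_*$ via the definition $\|y\|_* = \sup_{\|v\|\le 1}\langle v,y\rangle$. This delivers $f(x)-f(x_*)\le \frac{1}{2\mu}\|\nabla f(x)\|_*^2$.

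The main obstacle is the general-norm evaluation of $\max_u[\langle -\nabla f(x), u\rangle - \frac{\mu}{2}\|u\|^2]$: in the $\ell_2$ case this is a one-line completion of the square, but for an arbitrary norm I must decouple magnitude from direction and invoke the dual-norm definition exactly as above, taking care that $\|-\nabla f(x)\|_* = \|\nabla f(x)\|_*$ since the dual norm is itself a norm. Everything else is routine substitution.
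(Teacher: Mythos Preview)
Your argument is correct. The paper does not supply its own proof of this lemma --- it is quoted verbatim from \cite{sidford2024}, Lemma 6.1.7, and used as a black box --- so there is nothing in the paper to compare your approach against. For what it is worth, your proof is exactly the standard one: instantiate strong convexity at $y=x_*$ for the lower bound, and for the upper bound minimize the quadratic lower model over $y$, which amounts to computing the Fenchel conjugate of $\tfrac{\mu}{2}\|\cdot\|^2$ and yields $\tfrac{1}{2\mu}\|\nabla f(x)\|_*^2$. One tiny cosmetic point: when you write $u=tv$ with $t\ge 0$, the inner optimization over $t\ge 0$ strictly gives $\tfrac{1}{2\mu}(\langle -\nabla f(x),v\rangle_+)^2$ rather than the full square; the subsequent maximization over $\|v\|=1$ recovers $\|\nabla f(x)\|_*^2$ anyway (replace $v$ by $-v$), so the conclusion is unaffected.
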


\begin{lemma}
    If $f:\reals^d\to \reals$ is differentiable, $\mu$-strong convex w.r.t. $\|\cdot \|_{\mu}$, and L-smooth w.r.t. $\|\cdot \|_{L}$ then it holds that $\|y-x\|_{\mu}^{2}\leq \frac{L}{\mu}\|x-x_*\|^{2}_{L}$ where $x_*\in\argmin f$ and $x$ and $y$ are from Lemma \ref{sid615}.
\end{lemma}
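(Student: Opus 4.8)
The plan is to sandwich the Bregman divergence $D_f(y,x)$ between the strong-convexity lower bound and the smoothness upper bound evaluated at the \emph{same} ordered pair $(y,x)$, and then to control the size of the steepest-descent step $\|y-x\|_L$ using Lemma~\ref{sid616}. The key observation is that the two bounds are stated in different norms, which is precisely what produces both the factor $L/\mu$ and the switch from $\|\cdot\|_\mu$ on the left-hand side to $\|\cdot\|_L$ on the right-hand side; no direct comparison between the two norms is ever required. Throughout, $\|\cdot\|_*$ denotes the norm dual to $\|\cdot\|_L$, matching its use in Lemma~\ref{sid616}.

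First I would apply Definition~\ref{def:bregsmoothstrong} twice to the pair $(y,x)$. Since $f$ is $\mu$-strongly convex w.r.t.\ $\|\cdot\|_\mu$, we have $\frac{\mu}{2}\|y-x\|_\mu^2 \le D_f(y,x)$, and since $f$ is $L$-smooth w.r.t.\ $\|\cdot\|_L$, we have $D_f(y,x) \le \frac{L}{2}\|y-x\|_L^2$. Chaining these gives $\frac{\mu}{2}\|y-x\|_\mu^2 \le \frac{L}{2}\|y-x\|_L^2$, i.e.\ $\|y-x\|_\mu^2 \le \frac{L}{\mu}\|y-x\|_L^2$. It therefore remains only to prove $\|y-x\|_L^2 \le \|x-x_*\|_L^2$.

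Next I would pin down the magnitude of the steepest-descent step. Writing $v=y-x$ and $g=\nabla f(x)$, the point $y$ minimizes $\langle g,v\rangle + \frac{L}{2}\|v\|_L^2$, whose minimum value is $-\frac{1}{2L}\|g\|_*^2$ (this is the computation behind Lemma~\ref{sid615}). Combining this optimal value with $\langle g,v\rangle \ge -\|g\|_*\|v\|_L$ and completing the square in $s=\|v\|_L$ yields $\frac{L}{2}\big(s-\tfrac{\|g\|_*}{L}\big)^2 \le 0$, forcing $\|y-x\|_L = \frac{1}{L}\|\nabla f(x)\|_*$. Then Lemma~\ref{sid616} gives $\frac{1}{2L}\|\nabla f(x)\|_*^2 \le \frac{L}{2}\|x-x_*\|_L^2$, i.e.\ $\|\nabla f(x)\|_*^2 \le L^2\|x-x_*\|_L^2$, so $\|y-x\|_L^2 = \frac{1}{L^2}\|\nabla f(x)\|_*^2 \le \|x-x_*\|_L^2$. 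Substituting into the previous paragraph's inequality gives $\|y-x\|_\mu^2 \le \frac{L}{\mu}\|x-x_*\|_L^2$, as claimed.

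The only delicate point is the step-size identity $\|y-x\|_L = \frac{1}{L}\|\nabla f(x)\|_*$: for a general (non-strictly-convex) norm the minimizer $y$ of $Q$ need not be unique, so one cannot simply differentiate. The completing-the-square argument above is what resolves this, since it shows that \emph{every} minimizer has exactly this $\|\cdot\|_L$-magnitude. Everything else is a direct invocation of Definition~\ref{def:bregsmoothstrong} and of Lemma~\ref{sid616}, so I anticipate no further obstacles.
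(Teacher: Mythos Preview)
Your proof is correct and establishes exactly the stated inequality $\|y-x\|_\mu^2 \le \frac{L}{\mu}\|x-x_*\|_L^2$, but by a genuinely different route from the paper. The paper simply chains function-value inequalities from the three cited lemmas: Lemma~\ref{sid617} gives $\frac{\mu}{2}\|y-x_*\|_\mu^2 \le f(y)-f(x_*)$, Lemma~\ref{sid615} gives $f(y)\le f(x)$, and Lemma~\ref{sid616} gives $f(x)-f(x_*)\le \frac{L}{2}\|x-x_*\|_L^2$. This is very short, though note that the chain as written actually yields $\|y-x_*\|_\mu^2 \le \frac{L}{\mu}\|x-x_*\|_L^2$ rather than $\|y-x\|_\mu^2$; the paper's concluding line appears to contain a slip between $y-x_*$ and $y-x$. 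You instead sandwich $D_f(y,x)$ between the strong-convexity and smoothness bounds in Definition~\ref{def:bregsmoothstrong} at the same ordered pair, reducing the claim to $\|y-x\|_L\le\|x-x_*\|_L$, and then establish that via the step-magnitude identity $\|y-x\|_L=\tfrac{1}{L}\|\nabla f(x)\|_*$ combined with the two-sided bound of Lemma~\ref{sid616}. The paper's approach is more economical and never leaves the level of function values; your approach costs the extra completing-the-square computation but has the advantage of matching the statement verbatim with $y-x$ on the left.
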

\begin{proof}
    By Lemma \ref{sid616}, \ref{sid615}, and \ref{sid617}, we get $$\frac{\mu}{2}\|y-x_*\|_{\mu}^{2}\leq f(y)-f(x_*)\leq f(x)-f(x_*)\leq \frac{L}{2}\|x-x_*\|_{L}^{2}~.$$
    Hence, we get $\|y-x\|_{\mu}^{2}\leq \frac{L}{\mu}\|x-x_*\|^{2}_{L}$. 
\end{proof}


\section{Smooth Quadratic Prediction Markets with Constraints Analysis}\label{app:cons}



\begin{figure}[!]
	\centering
	\begin{subfigure}{0.32\linewidth}
		\includegraphics[width=\linewidth]{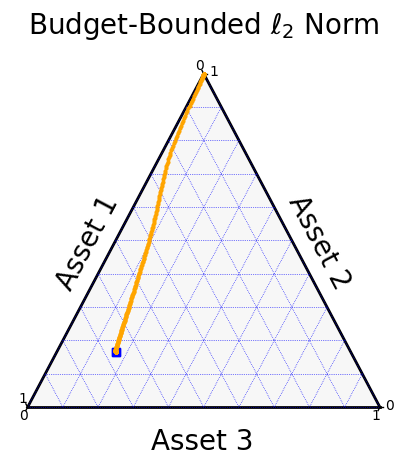}
		\label{fig:subfigA}
	\end{subfigure}
	\begin{subfigure}{0.32\linewidth}
		\includegraphics[width=\linewidth]{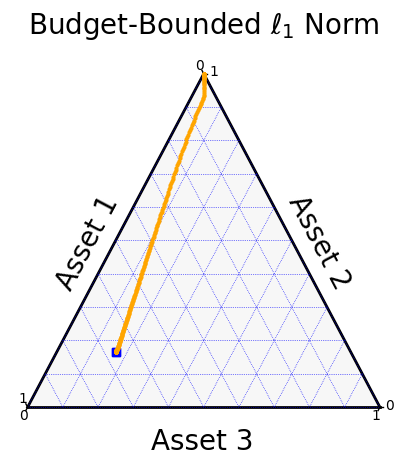}
		\label{fig:subfigB}
	\end{subfigure}
	\begin{subfigure}{0.32\linewidth}
	        \includegraphics[width=\linewidth]{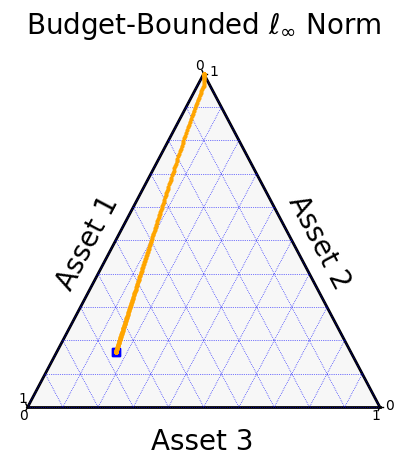}
	        \label{fig:subfigC}
         \end{subfigure}
	\caption{Let $q_{0}=(10,20,10)$, $C$ is softmax with smoothness of $L=1$, and $\mu = (1/6,1/6,2/3)$. The agents had a budget of $B=.01$. The blue square expresses $\mu$ and the orange path towards the blue square demonstrates the updating market distribution states. As denote by the titles's of each plot, we vary the norm used for the Smooth Quadratic Prediction Market. Note although softmax is not $\ell_{1}$-smooth, we use said norm experimentally for the sake of comparison.}
	\label{fig:subfiguresbudget}
\end{figure}

\textbf{Buy-Only Market}
For analysis sake, we substitute min argument of $r_t$ with $q_{t+1}=q_t+r_t$.
\begin{figure}[t]
	\centering
	\begin{subfigure}{0.32\linewidth}
		\includegraphics[width=\linewidth]{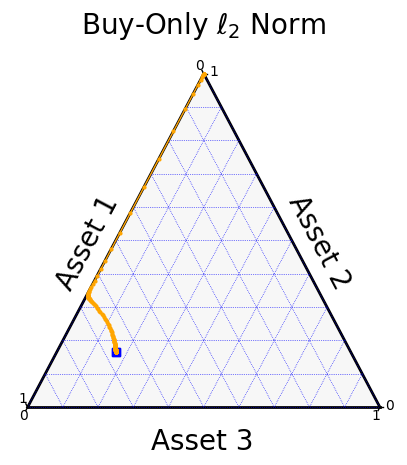}
		\label{fig:subfigA}
	\end{subfigure}
	\begin{subfigure}{0.32\linewidth}
		\includegraphics[width=\linewidth]{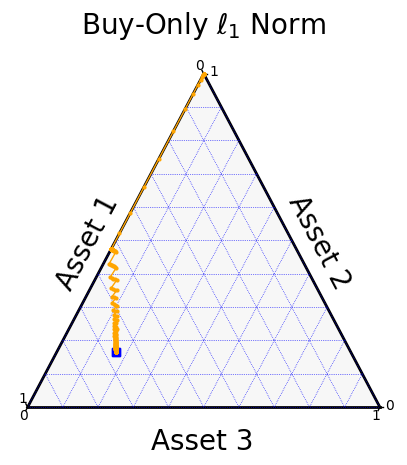}
		\label{fig:subfigB}
	\end{subfigure}
	\begin{subfigure}{0.32\linewidth}
	        \includegraphics[width=\linewidth]{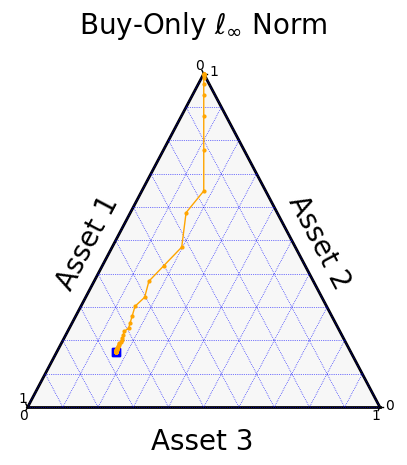}
	        \label{fig:subfigC}
         \end{subfigure}
	\caption{Let $q_{0}=(10,20,10)$, $C$ is softmax with smoothness of $L=1$, and $\mu = (1/6,1/6,2/3)$. The blue square expresses $\mu$ and the orange path towards the blue square demonstrates the updating market distribution states in a buy-only market. As denote by the titles's of each plot, we vary the norm used for the Smooth Quadratic Prediction Market. Note although softmax is not $\ell_{1}$-smooth, we use said norm experimentally for the sake of comparison.}
	\label{fig:subfiguresbuy}
\end{figure}

\begin{equation*}
\begin{aligned}
\min_{q_{t+1}\in\reals^{d}} \quad &  
\underset{\text{Payment to Market}}{\underbrace{\Big( \langle \nabla C(q_t),q_{t+1}-q_t \rangle +\frac{L}{2}\| q_{t+1}-q_t  \|^{2}_{2}  \Big)}} - \underset{\text{Expected Payout}}{\underbrace{\langle \mu ,q_{t+1}-q_t \rangle}} \\
\textrm{s.t.} \quad & q_t  \preceq q_{t+1} \\
\end{aligned}
\end{equation*}

Let $r= q_{t+1}-q_t$ and $c=\nabla C(q_t)-\mu$. 
This simplifies the objective too 

\begin{equation*}
\begin{aligned}
\min_{r} \quad &  
 \langle c ,r \rangle +\frac{L}{2}\| r  \|^{2}_{2}  \\
\textrm{s.t.} \quad & -r  \preceq 0. \\
\end{aligned}
\end{equation*}

Let the Lagrangian be denoted by $\mathcal{L}(r,\lambda )=\langle c,r \rangle +\|r\|_{2}^{2}-\langle \lambda, r \rangle $ where $\lambda \in \reals^{d}_+$. 
Solving for $r$ from $\min_{r\in\reals^{d}} \mathcal{L}(r,\lambda )$, we get $r^*=\frac{1}{L}(\lambda - c)$. 
Note the dual problem is 
$$\max_{\lambda \succeq 0 } \frac{-1}{2L}\langle c,c\rangle - \frac{1}{2L}\langle \lambda, \lambda \rangle +\frac{1}{L}\langle \lambda ,c \rangle  ~.$$
We claim that $\lambda^*=c_+$.
We verify $r^*=\frac{1}{L}(c_+ - c)$ and $\lambda^*=c_+$ via KKT conditions. 

\begin{itemize}
    \item \textbf{Stationarity} ($\frac{\partial \mathcal{L}}{\partial r^*_{i}}(r^*,\lambda^*)=0$ $\forall \; i\in[d]$): 
    $c+c_+-c-c_+=0$
    \item \textbf{Complimentary Slackness} ($-\lambda^*_ir^*_i=0$ $\forall \; i\in[d]$): If $c_i>0$ then we have $\frac{-c_i}{L}(c_i-c_i)=0$ and if $c_i\leq 0$ we have $\frac{-0}{L}(c_i-0)=0$.
    \item \textbf{Primal Feasibility}: If $c_i\geq 0$ then we have $\frac{1}{L}(c_i-c_i)=0$ and if $c_i <0$ we have $c_i <0$.
    \item \textbf{Dual Feasibility}: $0 \preceq c_+$ by definition 
\end{itemize}

Hence, we get that the update step is
$$q_{t+1}=q_t+\frac{1}{L}( (\nabla C(q_t)-\mu )_+- (\nabla C(q_t)-\mu ))$$
and observe when $\nabla C(q_t)=\mu$ we have a stationary point.


\section{Smooth Quadratic Prediction Markets with Adaptive Liquidity}\label{app:liquid}

\begin{definition}[Asymmetric norm]\label{def:asymnorm}
A function $g:\reals^{d}\to\reals_+$ is an asymmetric norm if it satisfies $\forall $ $x,y\in \reals^{d}$:
\begin{itemize}
    \item Non-negativity: $g(x) \geq 0$
    \item Definiteness: $g(x) = g(-x) = 0$ if and only if $x = 0$
    \item Positive homogeneity: $g(\alpha x) = \alpha g(x)$ for all $\alpha  > 0$
    \item Triangle inequality: $g(x + y) \leq g(x) + g(y)$.
\end{itemize}
\end{definition}

The following inequality due to the smoothness of $C^{\circ}$ motivates our proposed payment of $\text{Pay}_{L^{\circ}}$

\begin{align*}
\text{Pay}_{D^{\circ}}(q_t,r_t;v_t) &= C^{\circ}(q_t+r_t;v_t+g(r_t))-C^{\circ}(q_t;v_t) \\
&= \underset{\text{Payment to Breg. Market}}{\underbrace{\Big( C^{\circ}(q_t+r_t;v_t+g(r_t))-C^{\circ}(q_t;v_t+g(r_t))  \Big)}}
- \underset{\text{Liquidity Fee}\geq 0}{\underbrace{\Big( C^{\circ}(q_{t};v_t+g(r_t))-C^{\circ}(q_t;v_t)\Big)}}\\
& = \underset{\text{Payment to Breg. Market}}{\underbrace{\Big(  \langle \nabla C^{\circ}(q_t;v_t+g(r_t))  \rangle +D_{C^{\circ}(\cdot ; v_t+g(r_t))}(q_t+r_t,q_t)\Big)}} - \underset{\text{Liquidity Fee}\geq 0}{\underbrace{\Big( C^{\circ}(q_{t};v_t+g(r_t))-C^{\circ}(q_t;v_t)\Big)}}  \\
&\leq \underset{\text{Payment to Smooth-Quad Market}}{\underbrace{\Big( \langle \nabla C^{\circ}(q_t;v_t+g(r_t)),r_t \rangle +\frac{L^{\circ}}{2}\|r_t\|^{2}  \Big)}}+
\underset{\text{Liquidity Fee}\geq 0}{\underbrace{\Big( C^{\circ}(q_{t};v_t+g(r_t))-C^{\circ}(q_t;v_t)\Big)}}\\
&= \text{Pay}_{L^{\circ}}(q_t,r_t;v_t).
\end{align*}

  \adaparb*
\begin{proof}
    By (Lemma 3.6, \cite{abernethy2014general}) the VPM-DCFMM satisfies no arbitrage, i.e, for all $r_0,\dots ,r_t\in\textbf{S}$ it holds that $\langle \rho (y),\sum_{i=0}^{t}r_i \rangle \leq \sum_{i=0}^{t} \text{Pay}_{D^{\circ}}(q_i,r_i;v_i)$ for some $y\in\Y $.
    However, it also holds that $\sum_{i=0}^{t} \text{Pay}_{D^{\circ}}(q_i,r_i;v_i) \leq \sum_{i=0}^{t}\text{Pay}_{L^{\circ}}(q_i,r_i;v_i)$.
Therefore by combing the two inequalities we have that $ \langle \rho (y),\sum_{i=0}^{t}r_i \rangle \leq \sum_{i=0}^{t}\text{Pay}_{L^{\circ}}(q_i,r_i;v_i)$.
\end{proof}

\end{document}